
\documentclass{article}

\usepackage{microtype}
\usepackage{graphicx}
\usepackage{subfigure}
\usepackage{booktabs} 
\usepackage[table]{xcolor}
\usepackage{wrapfig}
\usepackage{hyperref}

\usepackage{etoc}
\etocdepthtag.toc{mtchapter}
\etocsettagdepth{mtchapter}{subsubsection}
\etocsettagdepth{mtappendix}{none}

\usepackage[bottom]{footmisc}
\usepackage{subfigure}


\usepackage[accepted]{icml2025}

\usepackage{amsmath}
\usepackage{amssymb}
\usepackage{mathtools}
\usepackage{amsthm}

\definecolor{LightBlue}{rgb}{0.925,0.957,1}

\usepackage{enumitem}

\usepackage{xspace}
\newcommand{\myattention}{DGA\xspace}
\newcommand{\myllm}{DGA-LLM\xspace}

\usepackage[capitalize,noabbrev]{cleveref}


\usepackage[textsize=tiny]{todonotes}

\usepackage{sail}

\begin{document}
\twocolumn[
\icmltitle{Curse of High Dimensionality Issue in Transformer for Long-context Modeling}



\icmlsetsymbol{equal}{*}

\begin{icmlauthorlist}
\icmlauthor{Shuhai Zhang}{equal,scut,pzl}
\icmlauthor{Zeng You}{equal,scut,pcl}
\icmlauthor{Yaofo Chen}{scut}
\icmlauthor{Zhiquan Wen}{scut}
\icmlauthor{Qianyue Wang}{scut}
\icmlauthor{Zhijie Qiu}{scut}\\
\icmlauthor{Yuanqing Li}{scut,pzl}
\icmlauthor{Mingkui Tan}{scut,key_Ministry}
\end{icmlauthorlist}

\icmlaffiliation{scut}{South China University of Technology}
\icmlaffiliation{pcl}{Peng Cheng Laboratory}
\icmlaffiliation{pzl}{Pazhou Laboratory}
\icmlaffiliation{key_Ministry}{Key Laboratory of Big Data and Intelligent Robot, Ministry of Education}

\icmlcorrespondingauthor{Mingkui Tan}{mingkuitan@scut.edu.cn}

\icmlkeywords{Machine Learning, ICML}

\vskip 0.3in
]



\printAffiliationsAndNotice{\icmlEqualContribution} 

\begin{abstract}
Transformer-based large language models (LLMs) excel in natural language processing tasks by capturing long-range dependencies through self-attention mechanisms. However, long-context modeling faces significant computational inefficiencies due to \textit{redundant} attention computations: while attention weights are often \textit{sparse}, all tokens consume \textit{equal} computational resources. In this paper, we reformulate traditional probabilistic sequence modeling as a \textit{supervised learning task}, enabling the separation of relevant and irrelevant tokens and providing a clearer understanding of redundancy. Based on this reformulation, we theoretically analyze attention sparsity, revealing that only a few tokens significantly contribute to predictions. Building on this, we formulate attention optimization as a linear coding problem and propose a \textit{group coding strategy}, theoretically showing its ability to improve robustness against random noise and enhance learning efficiency. Motivated by this, we propose \textit{Dynamic Group Attention} (\myattention), which leverages the group coding to explicitly reduce redundancy by aggregating less important tokens during attention computation. Empirical results show that our \myattention significantly reduces computational costs while maintaining competitive performance. 
Code is available at \url{https://github.com/bolixinyu/DynamicGroupAttention}.
\end{abstract}

\section{Introduction}
Transformer-based large language models (LLMs) \cite{touvron2023llama1,touvron2023llama2} have advanced in various natural language processing tasks \cite{brown2020language,wei2022chain}, exhibiting emergent abilities like few-shot learning and complex reasoning \cite{schaeffer2023emergent}. These capabilities stem primarily from their ability to model long-range dependencies through self-attention \cite{vaswani2017attention,schaeffer2023emergent}.
However, a key challenge in long-context modeling is the redundancy in attention computation: while attention weights are often \textit{sparse}, meaning many tokens contribute minimally to predictions, all tokens still consume \textit{equal} computational resources. This substantial inefficiency raises the question: how can we reduce \textit{redundant} computations without sacrificing model performance?

Existing approaches \cite{xiao2024efficient, han2023lm, longlora2023chen} often tackle this issue by discarding some tokens to simplify attention computations and reduce costs. However, while effective in specific scenarios, such methods risk disrupting token interactions, especially in tasks requiring comprehensive context understanding, \eg, question answering \cite{lu2022learn,singhal2025toward} and document summarization \cite{cao2017improving,pasunuru2021data}. Token removal may lead to incomplete or inaccurate context comprehension, impairing model performance. Thus, the key challenge is how to minimize redundant attention computations while maintaining critical token interactions.

Traditional probabilistic sequence modeling, such as autoregressive models \cite{vaswani2017attention,huang2018neural,openai2023gpt4}, treats long-context redundancy implicitly by processing the entire context as a block during next-token prediction, making it difficult to analyze the above issue. To better understand and analyze redundancy, we reformulate the probabilistic sequence modeling as a \textbf{supervised learning task} in Section \ref{sec: Sequence to Supervised}. This reformulation enables us to separate relevant and irrelevant tokens and provides a clearer view of redundancy, inspiring us to develop more efficient methods for long-context modeling.

To gain a deeper understanding of the redundancy in a long context, we provide a theoretical analysis of the sparsity of attention weights in transformers in Section \ref{sec: Redundancy Analyses in Long-Context Modeling}. Our analysis shows that only a small subset of tokens significantly contributes to the target representation, while many tokens offer little value to overall model performance, leading to inefficient use of computational resources. To better understand this issue and inspire new attention, we formulate attention optimization as a linear coding problem \cite{ryan2009channel}.  In long-context modeling, the sparsity of attention weights often causes learning unstable and inefficient \cite{lounici2011oracle,huang2010benefit}. To address this, we propose a \textbf{group coding strategy}, which aggregates tokens into meaningful groups. Our theoretical analysis demonstrates that the group coding is more robust to random noise and offers a more stable and efficient learning process. This suggests that grouping mechanisms can effectively reduce redundancy in long-context modeling, offering a new perspective on transformer optimization.

To further exploit the group mechanism to reduce computational redundancy in LLMs, we propose a \textbf{Dynamic Group Attention (\myattention)} mechanism, which explicitly reduces redundancy in attention computation without sacrificing critical token interactions, as detailed in Section \ref{sec: Group Mechanism for Long-Context Modeling}. The core idea of \myattention lies in dynamically grouping and aggregating less important tokens during attention computation, as illustrated in Figure \ref{fig: overview} and Algorithm  \ref{alg: attention}. 
Specifically, 1) \myattention identifies tokens that contribute minimally to the attention process, grouping them together and aggregating their representations before performing the attention operation. This reduces the number of individual tokens involved in attention calculations, thereby significantly reducing computational complexity. 2) Crucially, the more important tokens, which are vital for maintaining the model's performance, are handled individually, ensuring their interactions remain preserved and accurately represented. 3) Furthermore, \myattention introduces complementary keys and values for tokens restricted from accessing group information due to the autoregressive nature of LLMs. By focusing the attention mechanism on the most relevant tokens and reducing the redundancy of less informative ones, \myattention not only reduces redundancy computation but also maintains essential token interactions. Empirical results demonstrate the superiority of our \myattention for long-context modeling.

We summarize our main contributions as follows:

\begin{itemize}[leftmargin=*]

\item A supervised reformulation of sequence modeling for redundancy analysis: We reformulate the sequence modeling as a supervised learning task, enabling the separation of relevant and irrelevant tokens in long-context modeling. It provides a clearer understanding of redundancy and inspires more efficient methods for long-context modeling.

\item Theoretical analyses of attention sparsity and group coding strategy: We theoretically analyze the sparsity of attention weights in transformers, showing that only a small subset of tokens significantly contributes to the target representation. To inspire new attention, we formulate attention optimization as a linear coding problem. Based on this, we propose a group coding strategy and theoretically show that the group coding is more robust to random noise and offers a more stable and efficient learning process. 

\item Dynamic group attention for long-context modeling: We propose Dynamic Group Attention (\myattention) to address redundancy in attention computation for long-context modeling. \myattention dynamically aggregates less important tokens into meaningful groups, significantly reducing computational costs while preserving critical token interactions. Empirical results on diverse long-context tasks demonstrate that our \myattention significantly reduces computation while maintaining competitive performance.

\end{itemize}

\section{Related Work}
\textbf{Efficient transformer.}
Early efforts focus on introducing sparsity into attention mechanisms. For instance, Reformer \cite{kitaev2020reformer} uses Locality Sensitive Hashing (LSH) to group similar tokens into buckets, reducing computational demands. Similarly, Longformer \cite{beltagy2020longformer} combines global attention for key tokens with local sliding window attention to handle longer texts. Big Bird \cite{zaheer2020big} further integrates global, local, and random attention strategies to capture both long-range and local contexts.

Another line of research approximates attention mechanisms to achieve linear complexity. Performer \cite{choromanskirethinking} introduces a kernel-based approximation of softmax attention, reducing memory and computational costs. Similarly, Linear Transformer \cite{katharopoulos2020transformers} and RetNet \cite{sun2023retentive} reformulate self-attention as a linear dot-product of kernel feature maps. Additionally, \citet{sun2021sparse} learn parameterized hash functions for queries and keys to enhance efficiency. HyperAttention \cite{hanhyperattention} refines attention approximation by measuring problem hardness with fine-grained parameters.

Recent works further advance long-context modeling. LongLoRA \cite{longlora2023chen} partitions tokens into groups and shifts group partitions to facilitate inter-group communication for modeling efficiency. StreamingLLM~\cite{xiao2024efficient} and LM-Infinite~\cite{han2023lm} prioritize attention on the initial and final tokens, ignoring intermediate tokens to optimize attention. KVMerger \cite{wang2024model} focuses on KV cache compression via Gaussian-kernel-based Key clustering but ignores attention computation redundancy.
Some methods focus on dynamically adapting sparsity patterns. For instance, MInference \cite{jiangminference} identifies three unique sparse attention patterns and dynamically applies them during inference. 

Despite these advancements, such methods often rely on fixed sparsity patterns, which may sacrifice important token interactions. This can degrade performance in tasks requiring fine-grained interactions.
Recently, CCA-Attention \cite{chen2025core} leverages grouped aggregation of intra-token interactions combined with local sliding windows for efficient attention. In contrast, our method dynamically identifies critical tokens and selectively aggregates only unimportant tokens while preserving essential interactions via complementary tokens. This ensures adaptive redundancy reduction without sacrificing key contextual dependencies.

\textbf{Long-context modeling.} 
Extending the context window of LLMs to handle long sequences is critical for tasks requiring deep understanding. Recently, A plethora of work has attempted to extend the context length of LLMs \cite{chen2023extending,yen2024long,mohtashami2023landmark,an2024training,tworkowski2023focused}. For instance, Position Interpolation \cite{chen2023extending} linearly down-scales input position indices to fit within the original context window size, enabling RoPE-based LLMs to handle longer sequences. \citet{yen2024long} introduce a small encoder to process long inputs in chunks, allowing a frozen decoder to cross-attend to additional contexts, thus extending the context length without modifying the core architecture. In contrast, \citet{an2024training} propose Dual Chunk Attention, enabling models like Llama2-70B to support context windows exceeding 100,000 tokens without extra training, by decomposing long-sequence attention into chunk-based modules that capture both intra-chunk and inter-chunk dependencies. 

Other approaches focus on modifying position embeddings to extend context length, such as positional skipping \cite{zhu2024pose}, Yarn \cite{peng2024yarn}, and RoPE extrapolation \cite{liu2024scaling}. While these methods primarily address position embedding limitations, our approach is orthogonal, focusing on reducing computational redundancy through dynamic token grouping and aggregation.

\section{Sequence Modeling to Supervised Learning}
\label{sec: Sequence to Supervised}

\subsection{Traditional Probabilistic Sequence Modeling}

Traditional probabilistic sequence modeling \cite{vaswani2017attention,wang2024emu3} in language models typically involves generating a token sequence by predicting each next token based on previously generated ones. Given a token sequence $\{\bx_1,\bx_2, \ldots, \bx_{L}\}$, the training objective for a model $\theta$ is to maximize the likelihood of the entire sequence:
\begin{equation}\label{eqn: probabilistic sequence modeling}
\max_\theta~\prod_{i=1}^L P_\theta(\mathbf{x}_i | \bx_1, \bx_2, \ldots, \bx_{i-1}).
\end{equation}
During the generation, the model selects the most likely token at each step via $\bx_i{=}\arg \max_{\bx} P(\bx|\bx_{<i})$. This autoregressive process iteratively applies the same mechanism until the desired sequence length is reached.

\textbf{Limitations of probabilistic sequence modeling for analysis.}
While this approach ensures coherent and contextually relevant token generation, it struggles with long sequences. As the sequence length increases, the model often encounters more irrelevant information, leading to inefficiencies in both computation and optimization. Traditional sequence modeling treats the entire context as a monolithic block due to its conditional modeling paradigm, making it difficult to identify or eliminate redundant tokens. Moreover, the implicit handling of redundancy in traditional sequence modeling (\eg, self-attention \cite{vaswani2017attention}) limits the ability to optimize computational resources effectively.

\subsection{Supervised Reformulation for Sequence Modeling}

Recall that \textit{next-token prediction} \cite{zhang2024generative} in traditional sequence modeling is $\bx_i{=}\arg \max_{\bx} P(\bx|\bx_{<i})$. We can reformulate it as $\by {=} \arg \max_{\by} P( \by | C(\by))$ with $C(\by)=\{\bx_1, \bx_2, \ldots, \bx_{i-1}\}$ representing the context preceding the target token $\by$. This can be interpreted as predicting label $\by$ based on the input $C(\by)$, resembling the supervised learning mechanism \cite{hastie2009overview}.

Building on this insight, we formalize probabilistic sequence modeling as a \textbf{supervised learning task}. Given a training corpus $\{(C(\by_i), \by_i)\}_{i=1}^n$, where $\by_i {\in} \mathcal{V}$ is a token from the vocabulary $\mathcal{V}$ and $C(\by_i)$ is its context, the task is to maximize the likelihood of predicting conditioned on all $C(\by_i)$:
\begin{equation}\label{eqn: pro p}
    \max_{\theta} \prod_{\by \in \mathcal{V}, C(\by)} P_\theta(\by|C(\by)).
\end{equation}
Denoting $f$ as the feature extractor in the last layer of the LLM, the probability $P_\theta(\by|C(\by))$ can be reformulated as 
\begin{equation}
\label{eqn: P}
    P_\theta(\by|C(\by))=\frac{\exp(\bz_k)}{\sum_{j=1}^K\exp(\bz_j)},
\end{equation}
where $k$ is the index of the token in the vocabulary, and $\bz = \bw^\top f(C(\by)) \in \mathbb{R}^K$ represents the logits of $\by$. Here, $\bw \in \mathbb{R}^{d \times K}$ is the weight matrix of the final fully connected layer, and $K = |\mathcal{V}|$ is the vocabulary size. This is a standard supervised learning paradigm \cite{hastie2009overview,nasteski2017overview}. Further discussion on the relations between sequence modeling and supervised learning is in  Appendix \ref{sec: discussion on SL}.

\begin{remark}
    1) Ideally, for each target token $\by$, it is crucial to collect all relevant contexts $C(\by)$ for accurate prediction. 2) To train a promising model, it is essential to gather the comprehensive context associated with each $\by$. This can be achieved by collecting longer corpora \citep{fu2024data}. However, longer contexts inevitably introduce redundant tokens, leading to unnecessary computational overhead and optimization challenges \cite{altman2018curse}.
\end{remark}

\textbf{Advantages of supervised sequence modeling.}
The supervised sequence modeling provides a more structured approach to understanding long-context redundancy, although they are theoretically equivalent. It allows us to separate the context $C(\by)$ into relevant and irrelevant parts, allowing a more focused examination of the redundancy issue. 
By casting the problem as supervised learning, we can precisely determine which tokens in $C(\by)$ are vital for predicting $\by$, and which tokens may be discarded or aggregated.

\textbf{Redundancy in long contexts.}
From the above view, the context $C(\by)$ with redundancy can be formalized as
\begin{equation}\label{eqn: relevant_and_irrelevant}
C(\mathbf{y}) 
= \{\mathbf{x}_1^\mathrm{R}, \mathbf{x}_2^{\textcolor{red}{\mathrm{IR}}}, \mathbf{x}^\mathrm{R}_3, \mathbf{x}_4^{\textcolor{red}{\mathrm{IR}}}, \mathbf{x}_5^{\textcolor{red}{\mathrm{IR}}}, \ldots, \mathbf{x}^\mathrm{R}_{L-1}, \mathbf{x}_L^{\textcolor{red}{\mathrm{IR}}}\},
\end{equation}
where $\bx^\mathrm{R}$  denotes relevant tokens that contribute to predicting the target token $\bx$, and $\bx^{\textcolor{red}{\mathrm{IR}}}$  denotes irrelevant tokens that introduce unnecessary noise or computation.

\textbf{Challenges.} The redundancy becomes particularly pronounced in transformers, which process all tokens with equal computational cost. However, not all tokens contribute equally to predicting $\by$: many tokens in $C(\by)$ offer negligible or redundant information, inflating computational requirements and impairing optimization efficiency. This motivates us to analyze the redundancy within $C(\by)$ and find more effective methods for long-context modeling.

\section{Redundancy in Self-attention: From Mechanism to Coding Perspective}
\label{sec: Redundancy Analyses in Long-Context Modeling}

Section \ref{sec: Sequence to Supervised} reformulates long-context modeling as a supervised learning task (Eqn. (\ref{eqn: pro p})), which explicitly separates \textit{relevant} (critical for predictions) and \textit{irrelevant} (redundant for context) tokens. This formulation provides a structured foundation to investigate how redundancy manifests in attention computations and motivates our theoretical exploration from both the mechanism and coding perspectives.

\subsection{Redundancy in Self-attention Mechanism}
\textbf{Self-attention in LLMs.} Self-attention \cite{vaswani2017attention} is a key mechanism in LLMs that allows the model to assign the importance of different tokens for long-context modeling. Formally, given input $\bX {\in} \mathbb{R}^{L\times d}$, self-attention is defined as:
\begin{equation}\label{eq:attention}
\textbf{Att} = \text{softmax}\left(\frac{\mathbf{Q}\mathbf{K}^\top}{\sqrt{d}}\right)\mathbf{V},
\end{equation}
where $\bQ {=} \bX \bW^Q$, $\bK {=} \bX \bW^K$, and $\bV {=} \bX \bW^V$ represent the query, key, and value matrices, respectively. Here, $\bW^Q, \bW^K, \bW^V \in \mathbb{R}^{d \times d}$ are learnable projection weights and $d$ is the dimensionality of the token embeddings. For simplicity, we denote the attention weight as $\mathbf{A}{=}\text{softmax}({\mathbf{Q}\mathbf{K}^\top}/{\sqrt{d}})$, which captures the pairwise relevance between tokens, determining how much influence of each token on the representation of the other tokens. To facilitate the understanding of the attention mechanism, we rewrite the attention of $i$-th token separately:
\begin{equation}\label{eqn: single Att}
    \textbf{Att}_i=\sum_{j=1}^L \bA_{i,j} \bV_j \in \mathbb{R}^{1 \times d}, ~i=1, \ldots, L, 
\end{equation}
where $\mathbf{A}_i{=}\text{softmax}({\mathbf{Q}_i\mathbf{K}^\top}/{\sqrt{d}})$. This implies that the attention of a token is computed as a linear combination of the values of other tokens, weighted by the importance obtained from the dot product of its query and the keys of other tokens.
This mechanism enables the model to consider long-range dependencies for long-context modeling.

\textbf{Redundancy in attention weights.} To analyze the inherent redundancy in long-context modeling within LLMs, we focus on the representation of the context $C(\by)$ in a simplified model: a single-layer transformer with single-head attention. 
Given a sequence $C(\by) \in \mathbb{R}^{L\times d}$, denoted as $\bX{=}[\bx_1; \ldots; \bx_L]$ for simplicity, the representation of the last token $f(\bX)$ 
 can be formulated as:
\begin{equation}\label{eqn: norm atten}
    f(\bX)=\alpha_1 \bV_1 + \alpha_2 \bV_2+\cdots+ \alpha_j \bV_j +\cdots + \alpha_L \bV_L,
\end{equation}
where $ \alpha_j = \frac{\exp\left(\bQ_L \cdot \bK_j\right)}{\sum_{l=1}^L \exp\left(\bQ_L \cdot \bK_l\right) }$ represent the $j$-th weight.

The attention weights
$\alpha_j$ determine the contribution of each token's value vector $\bV_j$ to the target representation $f(\bX)$. However, in long sequences, a significant portion of the tokens in the context are irrelevant to the target representation as in Eqn. (\ref{eqn: relevant_and_irrelevant}), resulting in sparse and concentrated attention distributions, \ie, a few weights $\alpha_j$ are significantly greater than most weights. To facilitate understanding, we  analyze the sparsity of attention weights in the following:

\begin{thm}\label{thm: Sparsity of weight}
(\textbf{Sparsity on attention weights})
    Consider $\rho \in (1/L,1]$ as a sparse rate, we say that the weight $\alpha$ is $\rho$-sparse when there exists at least one probability greater than ${1}/{(L\rho)}$. Let $\xi=\bK\bQ_i \in \mathbb{R}^L$, then the probability of $\alpha$ being $\rho$-sparse $P_{sparse}(L,\rho)$ is given by
    \begin{equation} \label{eqn: P_sparse}
        P_{sparse}(L,\rho) \geq \max_{x>0} 1- [P_{head} P_{tail}]^L, 
    \end{equation}
    where $P_{head}=P \{ \exp(\xi_j) \leq  x\}$ with $j$ being some index of $\alpha$ and $P_{tail} = P \{ (L\rho-1) x \leq  \sum_{k\neq j}^L \exp(\xi_k) \}$.
\end{thm}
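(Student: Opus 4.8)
The plan is to work with the complementary event and control it coordinatewise. By definition, $\alpha$ fails to be $\rho$-sparse exactly when every weight is small, i.e. on the event $\bigcap_{j=1}^L \{\alpha_j \le 1/(L\rho)\}$, so it suffices to upper bound the probability of this intersection and then take the complement. Writing $\alpha_j = \exp(\xi_j)/\sum_{l=1}^L \exp(\xi_l)$ with $\xi = \bK\bQ_i$, I would first clear the denominator: since $\rho > 1/L$ gives $L\rho - 1 > 0$, the inequality $\alpha_j \le 1/(L\rho)$ is equivalent to
\[
(L\rho - 1)\,\exp(\xi_j) \le \sum_{k \ne j} \exp(\xi_k).
\]
This rewriting is the key simplification, because it separates the single ``head'' term $\exp(\xi_j)$ from the ``tail'' sum $\sum_{k\ne j}\exp(\xi_k)$ over the remaining coordinates.

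Next I would introduce the free threshold $x>0$ to decouple these two quantities. The point is that the two simple events $\{\exp(\xi_j)\le x\}$ and $\{\sum_{k\ne j}\exp(\xi_k)\ge (L\rho-1)x\}$ together force $\alpha_j \le 1/(L\rho)$: on their intersection one has $\alpha_j \le x/(x + (L\rho-1)x) = 1/(L\rho)$ by monotonicity of $t\mapsto t/(t+S)$ in the numerator and of $S\mapsto t/(t+S)$ in the tail mass $S=\sum_{k\ne j}\exp(\xi_k)$. Crucially, the head event depends only on $\xi_j$ while the tail event depends only on $\{\xi_k\}_{k\ne j}$; treating the coordinates $\xi_j = \bK_j\cdot\bQ_i$ as independent (an i.i.d. model on the keys), these two events are independent, so the probability of their intersection factorizes as $P_{head}\,P_{tail}$, with $P_{head}=P\{\exp(\xi_j)\le x\}$ and $P_{tail}=P\{(L\rho-1)x \le \sum_{k\ne j}\exp(\xi_k)\}$.

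Finally I would aggregate over the $L$ coordinates to reach the $L$-th power and then take complements. Treating the coordinatewise events across $j$ as independent (again under the i.i.d. model on $\xi$), the probability that every weight is small factorizes into $L$ identical contributions, each governed by the factor $P_{head}P_{tail}$ identified above, giving the bound $[P_{head}P_{tail}]^L$ on the non-sparse event; passing to the complement and maximizing over the free parameter yields $P_{sparse}(L,\rho) \ge \max_{x>0} 1 - [P_{head}P_{tail}]^L$. I expect the delicate part to be exactly this aggregation: the coordinatewise containment I use, $\{\exp(\xi_j)\le x\}\cap\{\sum_{k\ne j}\exp(\xi_k)\ge (L\rho-1)x\}\subseteq\{\alpha_j\le 1/(L\rho)\}$, pins down each factor only in one direction, and the tail events for different $j$ share summands, so promoting the single-coordinate estimate to a genuine product over all $L$ coordinates — with the inequality oriented correctly for a lower bound on $P_{sparse}$ — requires either an explicit independence assumption on the key/query inner products $\xi=\bK\bQ_i$ or a correlation-inequality (association) argument. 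By contrast, the algebraic reduction and the monotonicity step in the first two paragraphs are routine; the real content lies in this dependence handling and the accompanying inequality bookkeeping.
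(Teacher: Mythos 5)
Your proposal reconstructs the paper's skeleton (pass to the complement, bound the all-weights-small event by an $L$-th power of a single-coordinate probability, then split that coordinate's event with a free threshold $x$), but the decisive step is oriented the wrong way, and this is fatal. To obtain $P_{sparse}(L,\rho) \geq 1 - [P_{head}P_{tail}]^L$ you must prove the \emph{upper} bound $P\{\alpha_j \leq 1/(L\rho)\} \leq P_{head}P_{tail}$. Your containment $B_j := \{\exp(\xi_j)\leq x\}\cap\{(L\rho-1)x \leq \sum_{k\neq j}\exp(\xi_k)\} \subseteq A_j := \{\alpha_j \leq 1/(L\rho)\}$ is correct, but under your independence model it yields $P_{head}P_{tail} = P(B_j) \leq P(A_j)$ --- a \emph{lower} bound on exactly the quantity you need to bound from above. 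No i.i.d. assumption on $\xi$ and no association/correlation inequality can repair this, because the mismatch sits in the set containment itself, not in the dependence bookkeeping: the honest consequence of your splitting is $A_j \subseteq \{\exp(\xi_j)\leq x\}\cup\{(L\rho-1)x \leq \sum_{k\neq j}\exp(\xi_k)\}$, i.e.\ the union bound $P(A_j)\leq P_{head}+P_{tail}$, which gives a different (weaker, additive) statement than the claimed product form. In fact the product bound can fail even with fully independent coordinates: take $L\rho-1=1$, $\exp(\xi_j)$ uniform on $\{1,2\}$, and the tail sum uniform on $\{1.5,2.5\}$ independently; then $P(A_j)=3/4$ while $\max_{x>0}P_{head}(x)P_{tail}(x)=1/2$. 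So the route you sketch for closing the gap cannot close it.

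For comparison, the paper's own proof follows the same outline but forces the product at the critical point by writing $P\{\exp(\xi_j)\leq \frac{1}{L\rho}\sum_{l}\exp(\xi_l)\} = P\{\exp(\xi_j)\leq x \leq \frac{1}{L\rho}\sum_{l}\exp(\xi_l)\}$ as an \emph{identity}, then factoring via conditioning on $\{\exp(\xi_j)\leq x\}$ and dropping the conditioning. The inserted event is precisely your $B_j$-type subset of $A_j$, so that ``equality'' is in truth only ``$\geq$'' --- the very directional problem you flagged. In other words, your proposal rediscovers the paper's argument up to its weakest step and is commendably honest that this step is the crux; but flagging the gap is not the same as closing it, and as written your argument establishes a lower bound on the per-coordinate non-sparsity probability where the theorem requires an upper bound.
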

\begin{remark}
    Given $L$ and $\rho$, the probability $ P_{sparse}(L,\rho)$  is influenced by the interplay between $P_{head}$ and $P_{tail}$ that are functions of $x$. A larger $x$ increases  $P_{head}$ but decreases $P_{tail}$, leading to a balance where $P_{head} P_{tail} {<}1$.  For sufficiently large $L$, the exponential amplification of $L$ suppresses $P_{head} P_{tail}$ further, ensuring  a large $P_{sparse}(L,\rho)$.
\end{remark}
\begin{remark}
$P_{sparse}(L,\rho)$ rigorously quantifies how sparsity intensifies with increasing context length $L$. It reflects a model's \textit{inherent ability to prioritize critical tokens} across diverse contexts, \eg, $P_{sparse}(L,\rho)$ evaluated on xxx model rises sharply for $\rho{=}0.01$ as $L$ grows (see Figure \ref{fig: sparse_distribution}). Crucially, $\rho$-sparsity can be aggregated over multiple sequences to compute the average sparsity level (see Section \ref{sec: Empirical Studies of sparse}), providing a model-level characterization of sparsity that generalizes beyond individual instances. This metric serves as a unified approach to evaluate and compare the efficiency of attention mechanisms in long-context modeling.
\end{remark}

The sparsity of $\alpha$ implies that only a few tokens in the context contribute meaningfully to the target representation. While this sparsity aligns with the nature of self-attention, the redundancy in the input context still increases the computational burden during training, as all tokens are processed equally regardless of their relevance. This not only increases training time but also impairs optimization efficiency.

\subsection{Redundancy Analysis from Coding Perspective}
In this work, we aim to seek an effective mechanism to alleviate the issue of attention redundancy for long-context modeling. Inspired by the form in Eqn. (\ref{eqn: norm atten}), we simplify the optimization process of a transformer into a linear coding problem \cite{ryan2009channel}. This enables us to investigate the sparsity and redundancy of attention weights and inspires us to develop more efficient optimization approaches. 
Specifically, we will show how grouping mechanisms can reduce redundancy while improving optimization efficiency.

\begin{Prob}\label{prb:linear}
\textbf{(Linear coding problem for self-attention)} 
\begin{equation}
    \min_\alpha \left\| \sum_{j=1}^L \alpha_j \bV_j - \by \right\|_2^2, ~\st, \sum_{j=1}^L \alpha_j = 1, ~\alpha_j>0.
\end{equation}
where $\by$ is the embedding of the target token. 
\end{Prob}
In this formulation, $\alpha_j$ represents the contribution of the $j$-th token's value vector $\bV_j$ to the target representation, and its constraints are attributed to the softmax in the attention. Here, we consider each embedding $\by$ of the target token in the vocabulary and the parameters of $\bV_j$ as fixed, simplifying the analysis of transformer optimization using linear coding techniques \cite{mackay2003information,ryan2009channel}.

Due to the sparsity property established in Theorem \ref{thm: Sparsity of weight}, the attention weights $\alpha$  focus on a few key tokens, rendering most of the context redundant. However, this redundancy challenges optimization:
\textit{while sparse weights are beneficial, achieving stable and efficient learning is often hindered by noise sensitivity and overfitting to irrelevant tokens} \cite{lounici2011oracle,huang2010benefit}.

One naive approach is to add some penalties during optimization. Unfortunately, each context $C(\bx)$ has its optimal weight within the whole tokens of the context. To address this, benefiting from the group sparsity \cite{huang2010benefit,lounici2011oracle}, we propose a \textbf{group coding} strategy, reducing the inherent redundancy by partitioning the attention weights $\alpha$ into $k$ groups, \ie, $\bar{\alpha} \in \mathbb{R}^k$. Each group \textit{shares} a common weight, effectively pooling the contributions of grouped tokens. Let $G_1, \ldots, G_k$ denote the index sets from the token indices $\mI = \{1,\ldots,L\}$, where $\mI=\bigcup_{g=1}^k G_g$ and $G_i \bigcap G_j=\emptyset$ for $\forall i \neq j$. Formally, we define the group weight as $\bar{\alpha}_g=\frac{1}{|G_g|} \sum_{i \in G_g} \alpha_i$. The optimization problem is then reformulated as:
\begin{Prob}\label{prb: group linear}
\textbf{(Group coding problem for self-attention)} 
\begin{equation}
    \min_{\bar{\alpha}} \left \| \sum_{g=1}^k \frac{\bar{\alpha}_g}{|G_g|} \! \sum_{j\in G_g} \bV_j {-} \by \right\|_2^2, ~\st, \sum_{g=1}^k \bar{\alpha}_g |G_g| {=} 1, ~\alpha_g{>}0.
\end{equation}
\end{Prob}
This group mechanism not only reduces the dimensionality of optimization but also reduces the redundancy observed in the attention. Aggregating tokens into meaningful groups can effectively compress redundant information and exploit the shared structure within the context. We summarize the advantages of this approach as follows:

1) \textbf{Improved Robustness to Noise}: The grouping mechanism smooths out the impact of random noise by averaging over multiple tokens in the same group. Mathematically, as shown in Theorem \ref{thm: noise}, the variance of weight changes is reduced by a factor of $1/m^2$, where $m$ is the group size.
\begin{thm}\label{thm: noise}
    Assume the weights before normalization are $\tilde{\alpha}$ obtained by the product of the query and key matrix, and $\alpha_j=\frac{\exp(\tilde{\alpha}_j)}{\sum_{l=1}^L\exp(\tilde{\alpha}_l)}$. Consider a Gaussian noise $\Delta \tilde{\alpha} \sim \mN({\bf{0}}, \sigma^2 \bI_L) $ being added into $\tilde{\alpha} $ and each group size being $|G_g|=m$, the variance of the weight changes obtained via group coding in Eqn. (\ref{prb: group linear}) can be reduced by $1/m^2$.
\end{thm}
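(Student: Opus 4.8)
The plan is to propagate the Gaussian logit noise through the softmax by a first-order linearization and then exploit that group coding replaces a \emph{sum} of per-token weight changes by their \emph{average}, which contracts the variance quadratically in the group size $m$. Concretely, I would linearize the softmax around the clean logits $\tilde{\alpha}$: writing $\alpha_j = \exp(\tilde{\alpha}_j)/\sum_l \exp(\tilde{\alpha}_l)$, the softmax Jacobian is $\partial \alpha_j/\partial \tilde{\alpha}_k = \alpha_j(\delta_{jk} - \alpha_k)$, so to first order the induced weight change is the linear map $\Delta\alpha = \mathbf{J}\,\Delta\tilde{\alpha}$ with $\mathbf{J}_{jk} = \alpha_j(\delta_{jk}-\alpha_k)$. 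Since $\Delta\tilde{\alpha}\sim\mathcal{N}(\mathbf{0},\sigma^2 \mathbf{I}_L)$, the vector $\Delta\alpha$ is (to this order) jointly Gaussian with covariance $\Sigma = \sigma^2 \mathbf{J}\mathbf{J}^\top$; in particular each $\Delta\alpha_i$ has a well-defined variance $\Sigma_{ii}$, and the within-group entries have covariances $\Sigma_{ij}$.

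Next I would write the group weight change explicitly. By definition $\bar{\alpha}_g = \frac{1}{m}\sum_{i\in G_g}\alpha_i$, so the perturbation of the shared group weight is $\Delta\bar{\alpha}_g = \frac{1}{m}\sum_{i\in G_g}\Delta\alpha_i = \frac{1}{m}\mathbf{1}_{G_g}^\top\Delta\alpha$, where $\mathbf{1}_{G_g}$ is the group indicator. Pulling the scalar $1/m$ out of the variance then gives $\mathrm{Var}(\Delta\bar{\alpha}_g) = \frac{1}{m^2}\,\mathbf{1}_{G_g}^\top\Sigma\,\mathbf{1}_{G_g} = \frac{1}{m^2}\,\mathrm{Var}\!\big(\sum_{i\in G_g}\Delta\alpha_i\big)$. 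Comparing with the ungrouped scheme, where the $m$ tokens each carry their own noisy weight and thus contribute the aggregate change $\sum_{i\in G_g}\Delta\alpha_i$, the shared group weight contracts this variance by exactly the factor $1/m^2$, which is the claim. Under the additional assumption that the per-token changes are uncorrelated with common variance $v$, this specializes to $\mathrm{Var}(\Delta\bar{\alpha}_g)=v/m$, again a strict reduction.

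The main obstacle I anticipate is the softmax nonlinearity: the clean $1/m^2$ identity holds for the linearized map, so to make it rigorous I would either restrict to the small-noise regime and control the second-order Taylor remainder (whose contribution is $O(\sigma^2)$ and thus subdominant), or bound the off-diagonal covariances $\Sigma_{ij}$ so that the quadratic form $\mathbf{1}_{G_g}^\top\Sigma\mathbf{1}_{G_g}$ stays of order $m\cdot\max_i\Sigma_{ii}$. Everything after the linearization is an elementary variance-of-a-scaled-sum computation, so the only delicate point is justifying that propagating Gaussian noise through the softmax preserves the stated scaling.
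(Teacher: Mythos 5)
Your linearization of the softmax is exactly the paper's Lemma~\ref{app lemma: Delta approx} (the Jacobian $\partial\alpha_j/\partial\tilde{\alpha}_k=\alpha_j(\delta_{jk}-\alpha_k)$ is the same first-order expansion), and your worry about the Taylor remainder matches the paper's $o(\Delta\tilde{\alpha})$ bookkeeping. But the step where you extract the factor $1/m^2$ is where your argument fails to establish the claim. You push full-strength per-token noise through the softmax and then average: $\Delta\bar{\alpha}_g=\frac{1}{m}\sum_{i\in G_g}\Delta\alpha_i$, hence $\mathrm{Var}(\Delta\bar{\alpha}_g)=\frac{1}{m^2}\mathrm{Var}\bigl(\sum_{i\in G_g}\Delta\alpha_i\bigr)$. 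This identity is a tautology --- $\mathrm{Var}(X/m)=\mathrm{Var}(X)/m^2$ holds for any random variable $X$ --- and the baseline it compares against (the change of the \emph{total group mass}) is not "the variance of a weight change" in the ungrouped Problem~\ref{prb:linear}; it is the variance of a sum of $m$ such changes, a quantity that is deterministically $m$ times larger than the grouped weight to begin with. Measured against the natural baseline, namely a single per-token change $\Delta\alpha_i$ of variance $O(\sigma^2)$, your averaging argument yields only a $1/m$ reduction (as you concede in your independent-case specialization $v/m$), and it can yield no reduction at all when the within-group changes are strongly positively correlated --- which they generically are, since every $\Delta\alpha_i$ contains the shared term $-\alpha_i\sum_l\alpha_l\Delta\tilde{\alpha}_l$. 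So your route proves either a trivial rescaling identity or a strictly weaker ($1/m$) statement, not the theorem.

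The paper's mechanism places the factor $1/m$ \emph{before} the softmax, not after. In the group-coding objective of Problem~\ref{prb: group linear}, the group variable enters as $\frac{\bar{\alpha}_g}{m}\sum_{j\in G_g}\mathbf{V}_j$, so perturbing the group-level pre-normalization weight by $\Delta\tilde{\alpha}_g\sim\mathcal{N}(0,\sigma^2)$ is equivalent to perturbing each token's effective logit by $\Delta\tilde{\alpha}_g/m$, i.e., noise of standard deviation $\sigma/m$. The softmax expansion (Lemma~\ref{app lemma: Delta approx}) together with the variance computation (Lemma~\ref{app lemma: variance}) is then used only to certify that normalization preserves order of magnitude: logit noise of variance $s^2$ produces weight changes of variance $O(s^2)$. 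Applying this with $s=\sigma$ for Problem~\ref{prb:linear} and $s=\sigma/m$ for Problem~\ref{prb: group linear} gives $O(\sigma^2)$ versus $O(\sigma^2/m^2)$, which is the claimed $1/m^2$ reduction per weight. The missing idea in your proposal is precisely this dilution of the injected noise through the shared-weight structure of the objective; without it, averaging post-softmax fluctuations cannot produce a genuine $1/m^2$ gain.
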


2) \textbf{Accelerated Optimization}: Most weights in long-context modeling are relatively small (see Theorem \ref{thm: Sparsity of weight}), making most gradient contributions also small; The group optimization reduces both the optimization dimension and gradient sparsity while enhancing stability. Theorem \ref{thm: Hessia} shows that the group coding reduces the condition number \citep{edelman1988eigenvalues} of the objective function, enhancing convergence of the optimization, as verified in Section \ref{sec: Comparisons with sota}.
\begin{thm}\label{thm: Hessia}
    Denote $H$ and $\bar{H}$ as the Hessian matrix of the optimizations in (\ref{prb:linear}) and (\ref{prb: group linear}), respectively. Let $\kappa(H)={\lambda_{\mathrm{max}}(H)}/{\lambda_{\mathrm{min}}(H)}$ be the condition number \citep{edelman1988eigenvalues} of $H$, where $\lambda_{\mathrm{max}}$ and $\lambda_{\mathrm{min}}$ are the maximum and minimum eigenvalues of $H$, respectively. Consider each group size $|G_g|{=}m$, $\lambda_\mathrm{min}(H){>}0$ and $\lambda_\mathrm{min}(\bar H){>}0$, we have
    \begin{equation}
        \kappa(\bar{H}) \leq \kappa(H).
    \end{equation}
\end{thm}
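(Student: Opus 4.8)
The plan is to reduce both optimization problems to ordinary least-squares forms, read off their Hessians as Gram matrices, and then exploit the fact that group coding is exactly a \emph{linear reparametrization} of the original weights through an averaging matrix with orthogonal columns. Stacking the value vectors as rows of $\bV = [\bV_1; \ldots; \bV_L]$, the objective in Problem~\ref{prb:linear} is $\| \alpha^\top \bV - \by \|_2^2$, a convex quadratic in $\alpha$ whose Hessian is the constant Gram matrix $H = 2\,\bV\bV^\top \in \mathbb{R}^{L\times L}$. For the group problem, I would introduce the averaging matrix $\mathbf{P} \in \mathbb{R}^{L\times k}$ with entries $\mathbf{P}_{j,g} = 1/m$ if $j \in G_g$ and $0$ otherwise, so that the group-averaged value vectors satisfy $\bar{\bV} = \mathbf{P}^\top \bV$. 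The objective of Problem~\ref{prb: group linear} is then $\| \bar{\alpha}^\top \bar{\bV} - \by \|_2^2$ with Hessian $\bar{H} = 2\,\bar{\bV}\bar{\bV}^\top = \mathbf{P}^\top(2\,\bV\bV^\top)\mathbf{P} = \mathbf{P}^\top H \mathbf{P}$. (Here I take the Hessian of the quadratic objective; the affine constraint $\sum_j \alpha_j = 1$ contributes nothing to it, and the same argument extends verbatim to the reduced Hessian on the tangent space $\{v : \mathbf{1}^\top v = 0\}$ by inserting the corresponding projector.)

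The crucial structural observation is that the columns of $\mathbf{P}$ have disjoint supports of equal size $m$, each filled with the constant $1/m$; consequently $\mathbf{P}^\top \mathbf{P} = \tfrac{1}{m}\bI_k$. Setting $\mathbf{U} = \sqrt{m}\,\mathbf{P}$ therefore yields a matrix with \emph{orthonormal} columns, $\mathbf{U}^\top \mathbf{U} = \bI_k$, and the Hessian relation becomes
\begin{equation}
    \bar{H} = \mathbf{P}^\top H \mathbf{P} = \tfrac{1}{m}\,\mathbf{U}^\top H\, \mathbf{U}.
\end{equation}
Thus $\bar{H}$ is, up to the scalar $1/m$, the compression of $H$ onto the $k$-dimensional subspace spanned by the group directions.

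Finally, I would invoke the Poincar\'e separation theorem (Cauchy eigenvalue interlacing): for any $\mathbf{U}$ with $\mathbf{U}^\top\mathbf{U} = \bI_k$, every eigenvalue of $\mathbf{U}^\top H\,\mathbf{U}$ lies in $[\lambda_{\mathrm{min}}(H), \lambda_{\mathrm{max}}(H)]$. Combined with the scaling above, this gives $\lambda_{\mathrm{max}}(\bar{H}) \le \tfrac{1}{m}\lambda_{\mathrm{max}}(H)$ and $\lambda_{\mathrm{min}}(\bar{H}) \ge \tfrac{1}{m}\lambda_{\mathrm{min}}(H) > 0$ (the positivity using $\lambda_{\mathrm{min}}(H) > 0$). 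Taking the ratio, the factors of $1/m$ cancel:
\begin{equation}
    \kappa(\bar{H}) = \frac{\lambda_{\mathrm{max}}(\bar{H})}{\lambda_{\mathrm{min}}(\bar{H})} \le \frac{\tfrac{1}{m}\lambda_{\mathrm{max}}(H)}{\tfrac{1}{m}\lambda_{\mathrm{min}}(H)} = \kappa(H),
\end{equation}
which is the claim. The main obstacle is not any single computation but making the interlacing step airtight: one must verify that $\mathbf{U}$ genuinely has orthonormal columns (this is exactly where the equal-group-size assumption $|G_g| = m$ is used) and that the common factor $1/m$ cancels cleanly in the ratio, so that the bound carries no residual dependence on $m$. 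A secondary point worth checking is the well-posedness of $\kappa(\bar{H})$, which follows because interlacing already forces $\lambda_{\mathrm{min}}(\bar{H}) \ge \tfrac{1}{m}\lambda_{\mathrm{min}}(H) > 0$ under the stated assumption.
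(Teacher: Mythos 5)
Your proof is correct, and its skeleton matches the paper's: both write the Hessians as Gram matrices related through a grouping matrix (your $\mathbf{P}$ is the transpose of the paper's $M$, so $\bar{H}=\mathbf{P}^{\top}H\mathbf{P}$), both establish the paired bounds $\lambda_{\mathrm{max}}(\bar{H})\le\lambda_{\mathrm{max}}(H)/m$ and $\lambda_{\mathrm{min}}(\bar{H})\ge\lambda_{\mathrm{min}}(H)/m$, and both cancel the common factor $1/m$ in the ratio. The genuine difference is the key lemma used to get those bounds, and yours is the stronger one. The paper proves the upper bound via submultiplicativity of the spectral norm with $\lambda_{\mathrm{max}}(M^{\top}M)=1/m$ (fine), but proves the lower bound by a Rayleigh-quotient argument that needs $\lambda_{\mathrm{min}}(M^{\top}M)=1/m$; since $M^{\top}M$ is block-diagonal with rank-one blocks $\frac{1}{m^{2}}J_{m}$, its smallest eigenvalue is actually $0$, and the paper has to resort to ``focusing on the non-zero eigenvalues'' --- a step that is not airtight, because nothing forces the vectors $\bV\bx$ to avoid the null space of $M^{\top}M$. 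Your single observation --- rescale to $\mathbf{U}=\sqrt{m}\,\mathbf{P}$ with $\mathbf{U}^{\top}\mathbf{U}=\bI_k$, so that $\bar{H}=\frac{1}{m}\mathbf{U}^{\top}H\mathbf{U}$, then apply Poincar\'e separation (equivalently, evaluate the Rayleigh quotient of $H$ at the unit vectors $\mathbf{U}v$) --- delivers both bounds in one stroke and makes the minimum-eigenvalue step rigorous, since compression by a matrix with orthonormal columns cannot push the spectrum below $\lambda_{\mathrm{min}}(H)$. Your route also buys two side benefits: it shows the hypothesis $\lambda_{\mathrm{min}}(\bar{H})>0$ is redundant, being implied by $\lambda_{\mathrm{min}}(H)>0$, and it uses the dimensionally natural conventions $H=2\bV\bV^{\top}\in\mathbb{R}^{L\times L}$ and $\bar{H}=2\bar{\bV}\bar{\bV}^{\top}\in\mathbb{R}^{k\times k}$ (the Hessians with respect to $\alpha$ and $\bar{\alpha}$), whereas the paper works with the transposed $d\times d$ products $2\bV^{\top}\bV$ and $2\bV^{\top}M^{\top}M\bV$, for which the compression argument does not literally apply and the claimed eigenvalue inequalities become delicate when $k<d$.
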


Through the lens of the group linear coding, we highlight the redundancy in the long context can be mitigated by structured grouping. This approach not only reduces the computational overhead but also enhances model robustness and optimization efficiency, offering a feasible way to manage sparsity and redundancy in transformer optimization.

\begin{figure*}[t]
	\centering
        \vspace{4.5pt}
	\includegraphics[width=0.99\linewidth]{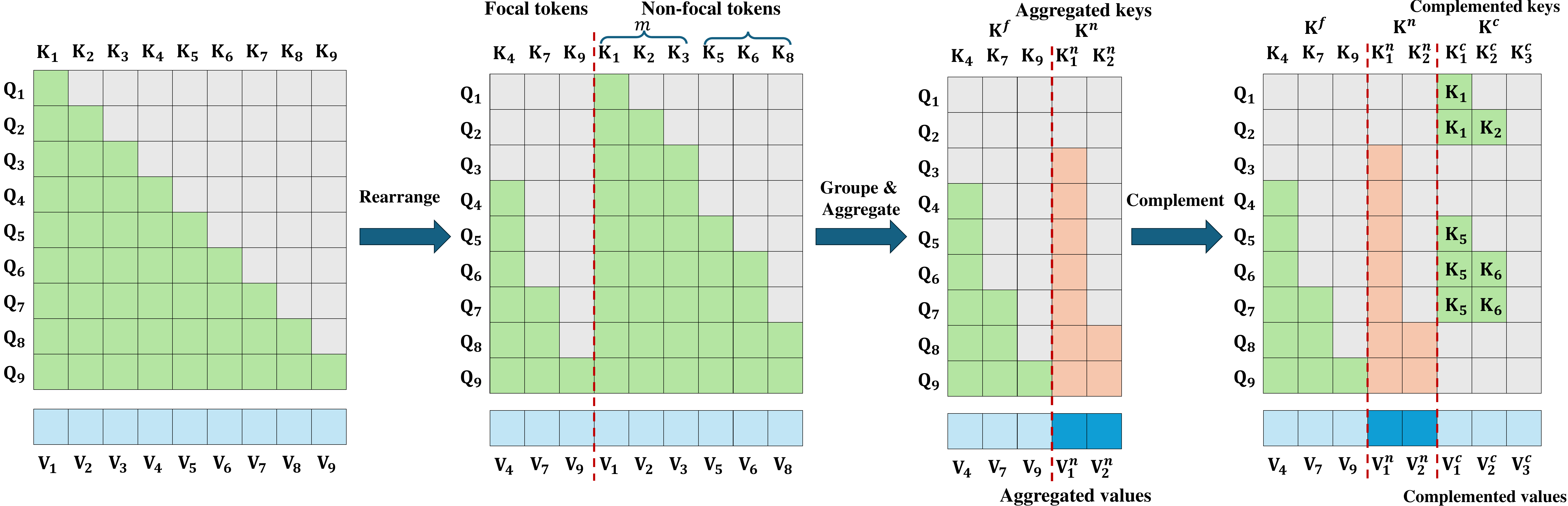}
	\vspace{-10pt}
	\caption{Overview of the proposed \myattention attention. The \myattention attention dynamically adjusts the computation based on token importance. First, \myattention moves the key-value (KV) pairs of important tokens to the front (denoted as $\bK^f$ and $\bV^f$). For less important tokens, \myattention groups and aggregates their KV pairs with a group size $m$  (denoted as $\bK^n$ and $\bV^n$). Finally, \myattention introduces complementary KV pairs (denoted as $\bK^c$ and $\bV^c$) to enable access to group information for tokens restricted by the autoregressive nature.}
	\label{fig: overview}
	\vspace{-5pt}
\end{figure*}

\section{Dynamic Group Attention Mechanism for Long-context Modeling}
\label{sec: Group Mechanism for Long-Context Modeling}

Theorems \ref{thm: noise} and \ref{thm: Hessia}  show the robustness and optimization efficiency of group coding in reducing redundancy, inspiring us to explore a novel approach to address the computational inefficiencies in long-context modeling.
Typical methods often use sparse mechanisms \cite{xiao2024efficient, han2023lm, longlora2023chen}, discarding some tokens to simplify attention computation and reduce costs. However, such approaches risk disrupting the flow of information, particularly in tasks requiring comprehensive context understanding, such as question answering or document summarization.

\begin{algorithm}[t]
\caption{ The pipeline of dynamic group attention.}
\label{alg: attention}
\begin{algorithmic}[1]
    \INPUT  Matrices $\bQ$, $\bK$, $\bV \in \mathbb{R}^{L \times d}$, group size $m$,  importance rate $\gamma$.
    
    \OUTPUT $\textbf{Att} \in \mathbb{R}^{L\times d}$.

    \STATE Compute the importance score $s$ via Eqn. (\ref{eqn: score}) and (\ref{eqn: approx_A}).
    \STATE Divide the tokens into \textit{focal tokens} $\mT ^\mathrm{foc}$ and \textit{non-focal tokens} $\mT^\mathrm{non}$ based on $s$ and $\gamma$.

    \STATE Construct $\bK'$ and $\bV' \in \mathbb{R}^{L\times d}$ by moving the items of $\bK$ and $\bV$  corresponding to $\mT^\mathrm{foc}$ to the front positions.

    \STATE Construct $\bK^\mathrm{group}$ and $\bV^\mathrm{group}$ via Eqn. (\ref{eqn: K_V_group}), (\ref{eqn: K_V_f_n}), (\ref{eqn: K_V_c}).

    \STATE Construct the causing mask $\bM$ via Algorithm \ref{alg: causing mask}.
    
    \FOR{$ i =1, \ldots, L$}
    \STATE // Compute the attention for the query $\bQ_i$. 
    \STATE $\textbf{Att}_i=\mathrm{Attention}^\mathrm{flash}(\bQ_i, \bK^\mathrm{group}, \bV^\mathrm{group}, \bM_i)$.

    \ENDFOR

\end{algorithmic}
\end{algorithm}

\textbf{Overview of dynamic group attention.} To address the above issues, we leverage the advantages of the group mechanism and propose \textbf{Dynamic Group Attention (\myattention)}, which explicitly reduces redundancy in attention computation without sacrificing essential token interactions. Specifically, we divide tokens into two parts based on the \textit{importance score}: a small subset of significant tokens as \textbf{focal tokens}, while the less critical tokens are treated as \textbf{non-focal tokens}. Non-focal tokens are then grouped and aggregated, allowing the model to focus on aggregated representations rather than individual tokens. The overview of our  \myattention is shown in Figure \ref{fig: overview} and the algorithm is in Algorithm \ref{alg: attention}.

Formally, we give the grouped $\bK^\mathrm{group}$ and $\bV^\mathrm{group}$ as:
\begin{equation}\label{eqn: K_V_group}
    \bK^\mathrm{group} {=} \left[\bK^{f};  \bK^{n}; \bK^{c} \right], ~
    \bV^\mathrm{group} {=} \left[\bV^{f};  \bV^{n}; \bV^{c} \right], 
\end{equation}  
where $\bK^{f}$ and $\bV^{f}$ are the key and value of the focal tokens, $\bK^{n}$ and ${\bV}^{n}$ are the key and value of the non-focal tokens that are grouped and aggregated, $\bK^{c}$ and $\bV^{c}$ are the key and value complementing tokens that  cannot access the group information due to the autoregressive nature. The detailed formulations are given in Eqn. (\ref{eqn: K_V_f_n}) and Eqn. (\ref{eqn: K_V_c}).
Recall the definition of attention in Eqn. (\ref{eqn: single Att}), we compute the attention with causing mask as:
\begin{equation}    
\mathbf{Att}_i= \sum_{j} (\bA^\mathrm{group}_{i,j} \odot \bM_{i,j}) \bV^\mathrm{group}_{j} \in \mathbb{R}^{1\times d}, 
\end{equation}
where $\bA^\mathrm{group}_{i} {=} \text{softmax}\left({\mathbf{Q}_i {\bK^\mathrm{group}}^\top}/{\sqrt{d}}\right)$, $\bM$ is the causing mask and $\odot$ is the element-product. We can implement this by the flash attention \cite{dao2022flashattention}, simply formulated as $\mathrm{Attention}^\mathrm{flash}(\bQ_i, \bK^\mathrm{group}, \bV^\mathrm{group}, \bM_i)$.

\subsection{Grouping for Self-attention}
Given a sequence with $L$ tokens, we first employ an importance statistic to derive the importance score $s \in \mathbb{R}^L$ (see Eqn. (\ref{eqn: score}) in Section \ref{sec: Token Identification}) and divide all tokens into \textit{focal tokens} (top-$\gamma$ important ones) and \textit{non-focal tokens}, denoted as $\mT^\mathrm{foc}$ and $\mT^\mathrm{non}$, respectively. For convenience in computation, we move the items of $\bK$ and $\bV$ associated with the focal tokens in $\mT^\mathrm{foc}$ to the front positions. We denote these rearranged items as $\bK'$ and $\bV'$, respectively. Subsequently, we group and aggregate the $\bK'$ and $\bV'$ matrices related to the non-focal tokens in $\mT^\mathrm{non}$ with a group size $m$ based on their importance in each group, which are formulated as:
\begin{equation} \label{eqn: K_V_f_n}
\begin{gathered}
    \bK^f {=} \left[\bK'_1; \ldots; \bK'_r\right], ~\bV^f {=} \left[\bV'_1; \ldots; \bV'_r\right] \in \mathbb{R}^{r \times d}, \\
    \bK^n {=} \left[ \sum_{j \in G_1} \bP_{1,j} \bK'_j; \ldots;  \sum_{j \in G_k} \bP_{k,j} \bK'_j \right], \\
    \bV^n {=} \left[\ \sum_{j \in G_1} \bP_{1,j} \bV'_j; \ldots;  \sum_{j \in G_k} \bP_{k,j} \bV'_j \right], 
    \end{gathered}
\end{equation}
where $\bP_{i, \cdot}{=}\mathrm{softmax}(\bQ_{\mathrm{max}G_i} \bK^\top_{[\mathrm{min}G_i : \mathrm{max}G_i]}) {\in} \mathbb{R}^m$ is the weight vector of $i$-th group, $G_i$ is the index set of $i$-th group, $\min G_i$ and $\max G_i$ are the first and last token index in the $i$-th group,
$r{=}|\mT^\mathrm{foc}|$ and $k{=}  (L-r)/m $ are the number of focal tokens and groups. Here, we compute the weights $\bP$ using only the query of the last token in each group.  
If non-focal tokens are not divisible by $m$, we will include more tokens in $\mT^\mathrm{foc}$, ensuring all tokens are processed.

Note that some tokens may not access preceding tokens after grouping aggregation. For instance, in the rightmost of Figure \ref{fig: overview}, the query $\bQ_2$ cannot attend to the keys $\bK_1$ and $\bK_2$ (\ie, their masks are zeros at the $4$-th and $5$-th columns) even if the $2$-th token should access the $1$-th and $2$-th tokens. To address this, we introduce complementary keys $\bK^c$ and values $\bV^c$ to restore the missing information for the query $\bQ_i$ (\eg, adding the keys ``$\bK_1$ and $\bK_2$'' and the values ``$\bV_1$ and $\bV_2$'' to the query $\bQ_2$ at the $6$-th and $7$-th columns):
\begin{equation}\label{eqn: K_V_c}
    \!\!\!\bK^c{=}\left[\bK_j\right], \bV^c{=}\left[\bV_j\right], j {\in} G_z ~ \mathrm{if}~i {\in} [\min{G_z}, \max{G_z}],\!
\end{equation} 
where $z$ denotes the index of the $i$-th token's nearest neighbor group. We then mask inaccessible keys and values due to the autoregressive nature (more details in  Appendix  \ref{sec: Causing Mask}).

\textbf{Group inference.} In the prefilling phase, we can calculate attention in the same way as in the training phase. In the decoding phase, the model generates new tokens, which requires us to dynamically group and aggregate tokens. When the model generates less than $m'= 1.1m$ tokens (with $\sim 10\%$ slots for focal tokens), we use standard attention for calculation;  Once $m'$ tokens are generated, we can use the attention of the last query to re-calculate the weight $\bP$ and update the group keys $\bK^\mathrm{group}$ and values $\bV^\mathrm{group}$ for the subsequent generation. However, we only add a new group key and value to the original ones for reduced memory usage once generating $m'$ tokens.

\subsection{Fast Focal Token Identification for Grouping}\label{sec: Token Identification}
Evaluating the importance of the token is crucial for our group-oriented attention. To this end, we employ an importance score $s$ by computing accumulated attention weights:   
\begin{equation}\label{eqn: score}
    s_i= \frac{\sum_{j=1}^L \bA_{j,i}}{\|\bA_{\cdot, i}\|_0},
\end{equation}
where $\|\bA_{\cdot, i}\|_0$ is the number of  non-zero elements in the $i$-th column of $\bA$, which will be $ L{-}i{+}1$ for the decoder-only based transformer. Similar techniques have been used for the inference of LLMs \cite{zhang2023h2o, he2024zipcache}.

For fast to obtain an importance score $s$, we sample a small portion of recent tokens and some other random tokens to obtain $\widetilde{\bQ}$ to approximate the attention weight $\bA$:
\begin{equation}\label{eqn: approx_A}
    \widetilde{\bA}= \mathrm{softmax} \left(\widetilde{\bQ} \bK^\top/\sqrt{d} \right).
\end{equation}
For fast implementation, we perform the matrix partitioning technique \cite{dao2022flashattention} when computing the softmax.

\section{Experiments}

\subsection{Experimental setup}

\textbf{Benchmark and evaluation metrics}. We compare our methods on LongBench-E~\cite{bai2023longbench} to evaluate the long-context understanding capabilities
of LLMs, and use EM score~\cite{liu2024lost} to measure the ability to find the key information within a long multi-document context.  Additionally, we use inter-token latency (ITL \cite{chitty2024llm}) to measure the time delay between generating consecutive tokens.  More details are in Appendix \ref{sec: app More Details on Benchmark}.

\textbf{Models}. We integrate our \myattention attention into LLaMA2-7B \citep{touvron2023llama2} and GPT2-S \citep{radford2019language} and OPT-125M \cite{zhang2022opt}. We train the models on SlimPajama \citep{cerebras2023slimpajama}. All training uses 8 A800 GPUs with a micro-batch size of 1, gradient accumulation of 8, and 1000 steps, consistent across all context lengths.

\begin{figure}[t]
    \centering
    \vspace{-2pt}
    \subfigbottomskip=1.5pt
    \subfigcapskip=-5pt
    \subfigure[Distribution of attention weights for a random context.]{
		\label{fig: attention_distribution}
		\includegraphics[width=0.99\linewidth]{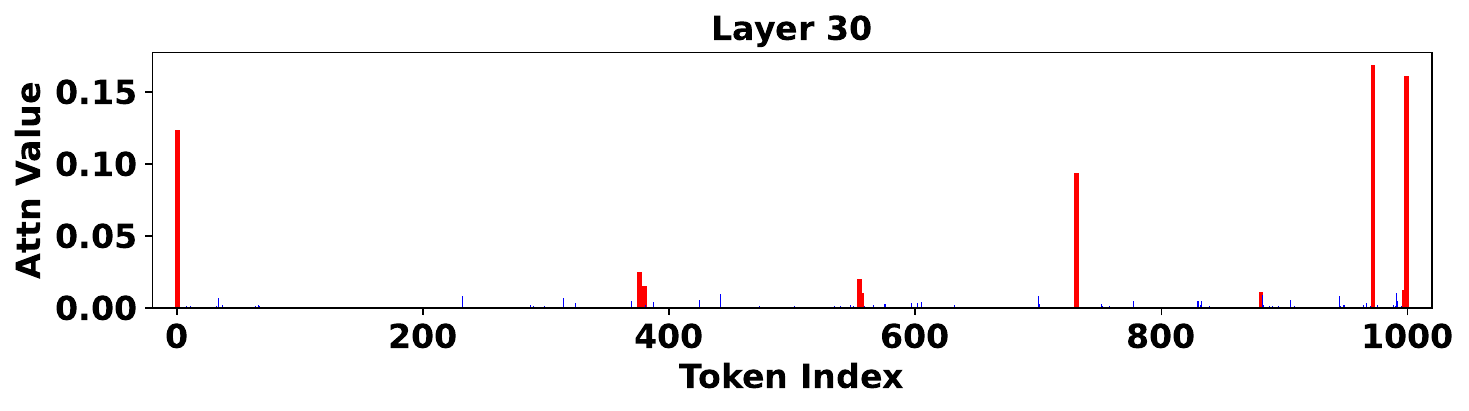}
    }
    \subfigure[Distribution of $\rho$-sparse across different context lengths.]{
		\label{fig: sparse_distribution}
		\includegraphics[width=0.99\linewidth]{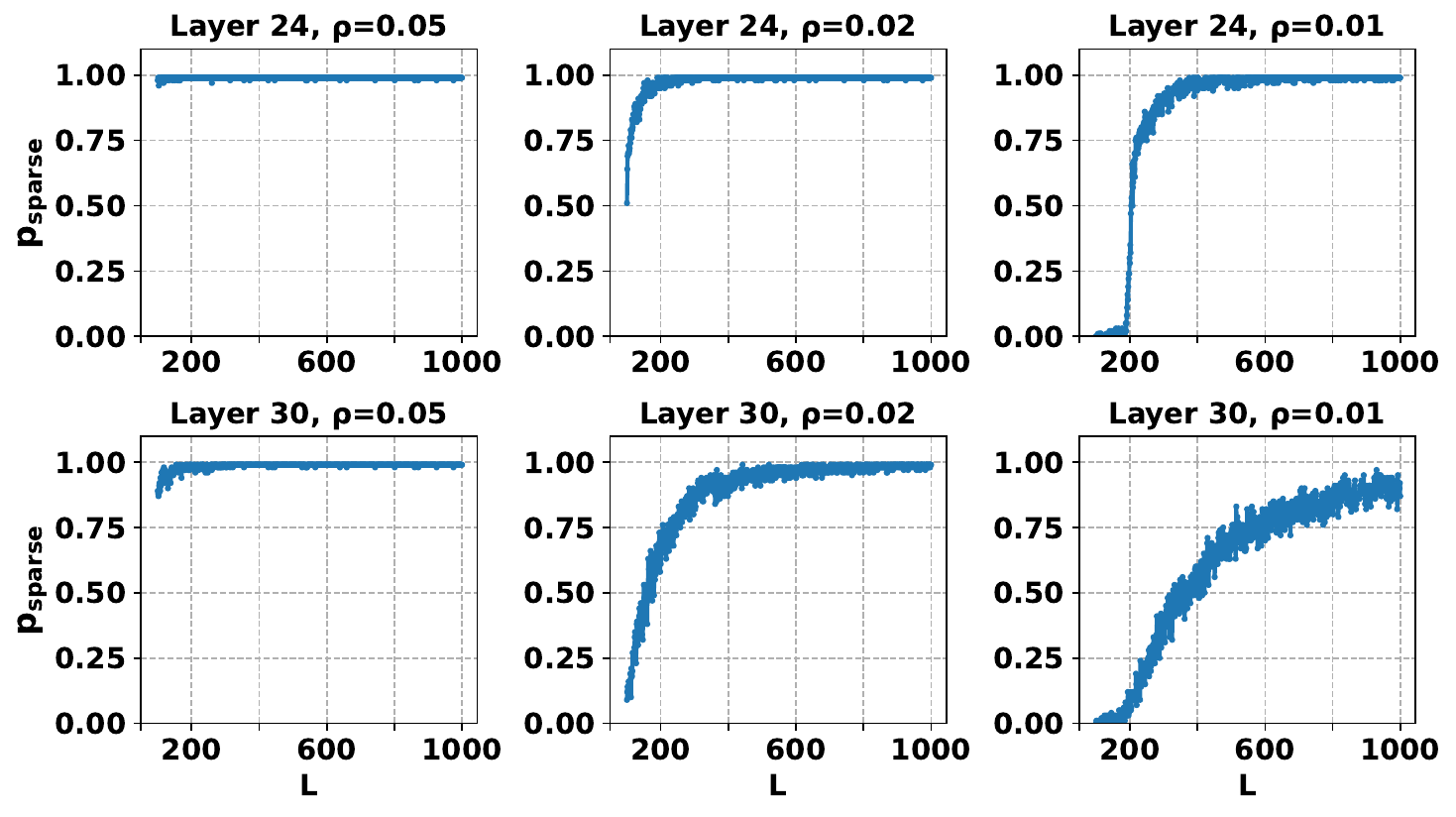}
    }
    \subfigure[Distribution of $\rho$-sparse across different datasets and models.]{
		\label{fig: sparse_distribution_dataset_model}
		\includegraphics[width=0.99\linewidth]{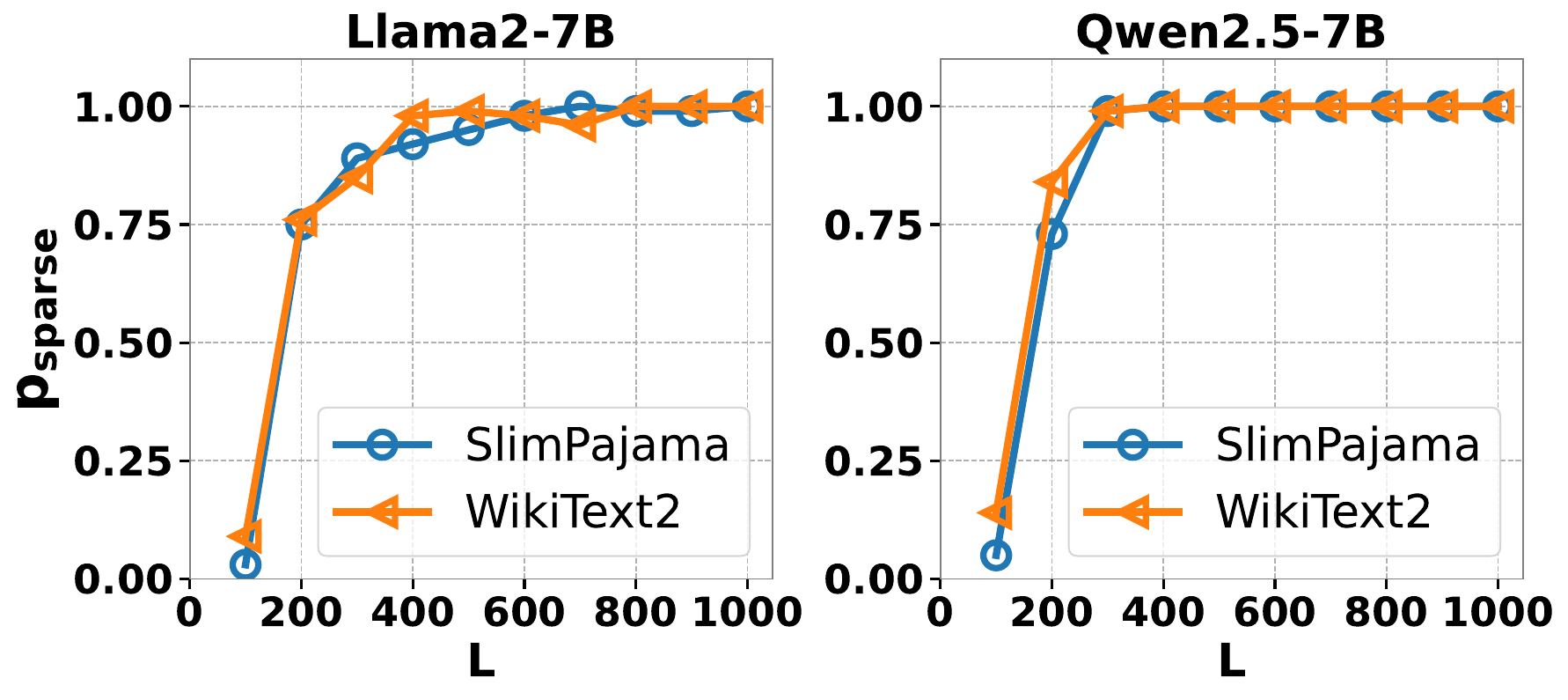}
    }
    \vspace{-8pt}
    \caption{Sparsity on the attention weights on  for long-context modeling. (a) shows the attention weight distribution for a random example on SlimPajama. (b) demonstrates the distribution of $\rho$-sparse across context lengths, \ie, $P_{sparse}(L, \rho)$, over 100 random examples on SlimPajama. (c) exhibits the distribution of $\rho$-sparse over 100 random examples on WikiText2 and SlimPajama, across llama2-7B (left) and Qwen2.5-7B models (right) as $\rho=0.02$.}
    \label{fig: sparse_attention}
    \vspace{-15pt}
\end{figure}

\textbf{Baselines}.  We compare our \myattention attention with following baselines:  StreamingLLM~\citep{xiao2024efficient}, LM-Infinite~\citep{han2023lm}, and MInference~\citep{jiangminference}. 
We defer more implementation details for these baseline methods in  Appendix  \ref{set: Details on Baselines}.

\begin{table*}[t]
\centering
\caption{Comparisons of different methods on LongBench-E~\cite{bai2023longbench}, where the 95\% text length quantile is 31K. ``ITL'' denotes inter-token latency \cite{chitty2024llm}, which measures the time delay between generating
consecutive tokens.}
\label{tab:longbench}
\resizebox{\textwidth}{!}{%
\begin{tabular}{l|cccccc|>{\columncolor{LightBlue!80}}c|>{\columncolor{LightBlue!80}}c}
\hline
Methods & Single-Doc. QA & Multi-Doc. QA & Summary & FS. Learning & Synthetic & Code & Avg. $\uparrow$ & ITL / ms $\downarrow$ \\ \hline
\textit{LLaMA2-7B (Vanilla Self-Attention)} & 6.43 & 2.37 & 13.65 & 56.65& 3.04 & 48.0 & \textbf{21.69} & \underline{69.70} \\
MInference~\cite{jiangminference} &5.86  &2.65  &14.33 &55.99  &2.63 &48.41  & \underline{21.64} &94.34\\
StreamingLLM~\cite{xiao2024efficient} & 4.99 & 4.13 & 11.51 & 45.43 &2.16 & 30.38 & 16.43 &78.28\\
LM-Infinite~\cite{han2023lm} &3.54 & 2.61 & 3.31 & 48.97 &1.33 & 35.26 & 15.84&102.22 \\
\rowcolor{pink!25} \myllm (Ours) & 3.61 & 3.58 & 6.81 & 57.90 & 1.47 & 53.45 & 21.14 & \textbf{28.80} \\

\hline
\end{tabular}%
}
\vspace{-0.1in}
\end{table*}

\subsection{Empirical Studies of Redundancy in Self-attention}
\label{sec: Empirical Studies of sparse}

We investigate the redundancy in transformer-based LLMs for long-context modeling by analyzing the sparsity of vanilla attention weights. Figure \ref{fig: attention_distribution} shows the distribution of the vanilla attention weights for a sequence randomly sampled from SlimPajama on LLaMA-7B, which shows that only a few tokens contribute significantly to the predictions. 

To further quantify the sparsity across more examples on long contexts, we examine $P_{sparse}(L,\rho)$ of the attention weights across 100 contexts of length 1k over SlimPajama on LLaMA-7B at layers 24 and 30 under different sparse rates $\rho$. Note that lower $\rho$ means sparser attention weights. 
Figure \ref{fig: sparse_distribution} shows that, as $\rho$ decreases, \ie, $\rho: 0.05 {\to} 0.02 {\to} 0.01$, the probability $P_{sparse}(L,\rho)$ increases with context length $L$, indicating that attention weights become increasingly sparse with longer contexts, and only a few tokens contribute significantly to predictions.

Note that $P_{sparse}(L,\rho)$ is dataset-independent and adapts to $L$. Figure \ref{fig: sparse_distribution_dataset_model} shows $P_{sparse}$ has \textit{nearly identical trends} across SlimPajama and WikiText2 over llama2-7B or Qwen2.5-7B model, confirming $\rho$ generalizes across datasets. 
As $\rho=0.02$, $P_{sparse}$ increases sharply with $L$, where $P_{sparse}\to1$ at $L=400$ over llama2-7B and  $L=300$ over Qwen2.5-7B, respectively, proving sparsity strengthens with context length universally and model-level characterized. 
These results align with our analysis in Theorem \ref{thm: Sparsity of weight}, highlighting the inherent redundancy in long-context attention computations.
More results are in Appendix \ref{sec: app Results on Redundancy}.

\subsection{Comparisons on Long-Context Modeling}
\label{sec: Comparisons with sota}

\begin{table}[t]
\centering
\caption{ Comparisons of different methods on EM score~\cite{liu2024lost} evaluated at lengths from 4K to 32K.}
\setlength{\tabcolsep}{1pt}
\label{tab:ruler}
\resizebox{0.48\textwidth}{!}{%
\begin{tabular}{l|cccc|>{\columncolor{LightBlue!80}}c}
\hline
Methods & ~~4K~~ & ~~8K~~ & ~~16K~~ & ~~32K~~ & ~~Avg.~~ \\ \hline
\textit{LLaMA2-7B (Vanilla Self-Attention)} & \textbf{37.2} & \textbf{36.4} & \textbf{33.8} & \textbf{26.8} & \textbf{33.6} \\
StreamingLLM & 30.2 & 25.8 & 22.2 & 20.8 & 24.8 \\
LM-Infinite & 29.4 & \underline{28.6} & 23.8 & 22.4 & 26.1 \\
MInference & 29.2 & 24.8 & 23.6 & 17.0 & 23.7 \\
\rowcolor{pink!25} \myllm (Ours) & \underline{35.0} & 27.4 & \underline{25.6} & \underline{22.6} & \underline{27.7} \\\hline
\end{tabular}%
}
\end{table}

\begin{table}[t]
\centering
\vspace{-0.05in}
\caption{Comparisons of different methods on computational efficiency in terms of inter-token latency (ITL \cite{chitty2024llm}) evaluated at lengths from 4K to 16K.}
\label{tab:ITL}
\resizebox{0.48\textwidth}{!}{%
\begin{tabular}{l|cccc}
\hline
Methods & ~~4K~~ & ~~8K~~ & ~~16K~~\\ \hline
\textit{LLaMA2-7B (Vanilla Self-Attention)} & \underline{36.87}  & \underline{42.32}  & \underline{69.70}  \\
MInference & 93.13  &93.93 &94.34  \\
StreamingLLM &40.00  &46.67 &78.28  \\
LM-Infinite &49.70  &64.34  &102.22  \\
\rowcolor{pink!25} \myllm (Ours) & \textbf{26.26} & \textbf{26.87} & \textbf{28.79} \\
\hline
\end{tabular}%
}
\vspace{-0.1in}
\end{table}

\begin{figure}[t]
    \centering
    \subfigure[Optimization Efficiency]{
		\label{fig: Optimization Efficiency}
		\includegraphics[width=0.462\linewidth]{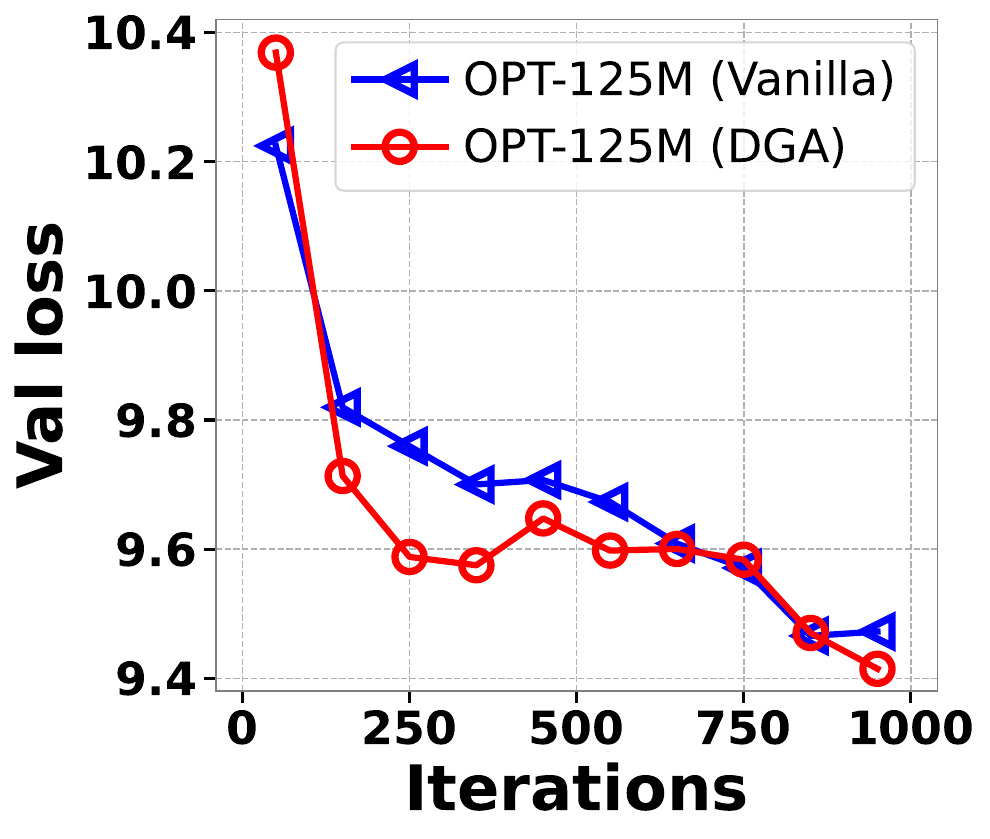}
    }
    \subfigure[Robustness]{
		\label{fig: Robustness}
		\includegraphics[width=0.462\linewidth]{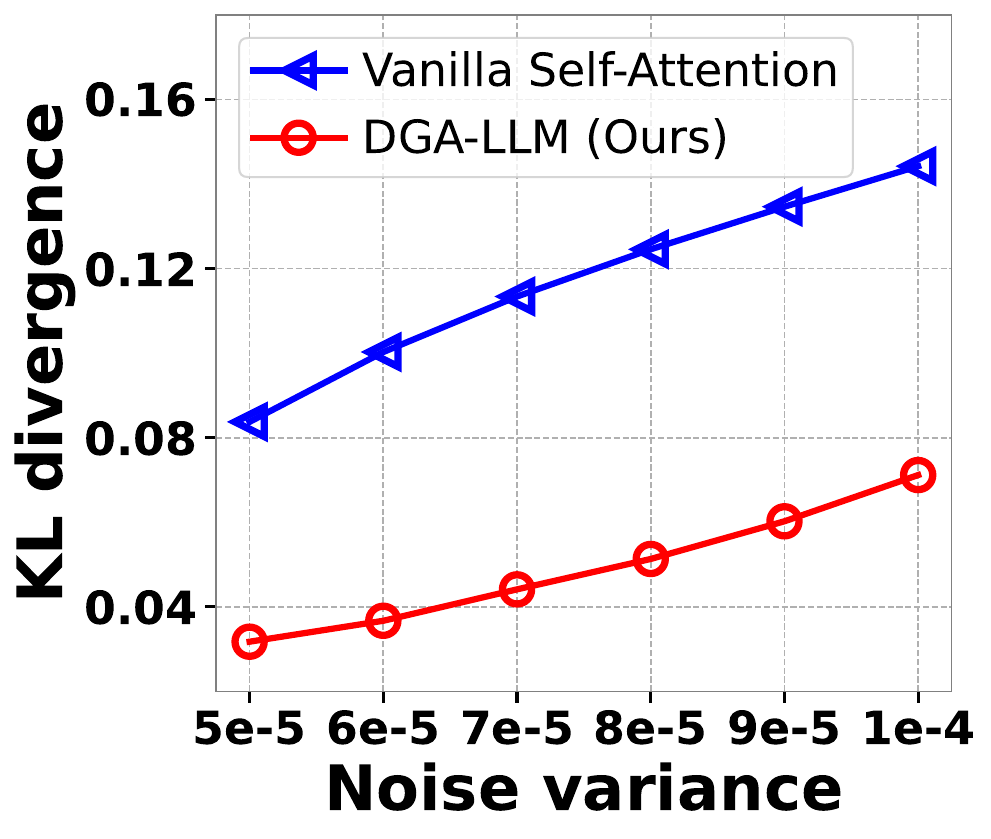}
    }
    \vspace{-12pt}
    \caption{Comparisons on optimization efficiency and robustness to random noise between Vanilla Self-Attention and \myllm (ours). Subfigure (a) shows validation losses of our \myllm and vanilla ones on OPT-125M, where the models are trained with a 2K context length on SlimPajama. Subfigure (b) demonstrates average KL-divergence between the output probability distributions before and after adding Gaussian noise for Vanilla Self-Attention and our \myllm under different levels of noise variance ($\sigma^2$).}
    \label{fig: optimization efficiency and robustness}
    \vspace{-10pt}
\end{figure}

\textbf{Results on Longbench-E.}
We conduct experiments on LongBench-E \cite{bai2023longbench}, which includes tasks with text lengths up to 31K. Table \ref{tab:longbench} shows that our \myllm  achieves competitive performance across tasks, with an average score of $21.14$ for LLaMA2-7B trained on 8K context-length texts. While our method does not significantly outperform others in average scores, it demonstrates notable advantages in Inter-token Latency (ITL) at 16K context length. Specifically, \myllm achieves an ITL of $28.80$ ms, $2.42 \times$ faster than vanilla LLaMA2-7B ($69.70$ ms) and $3.55 \times$ faster than LM-Infinite ($102.22$ ms). This highlights the efficiency of our group coding strategy in reducing computational redundancy while maintaining performance, making it particularly suitable for time-sensitive applications. More results on ITL are in Section \ref{sec: Comparisons on Computational Efficiency}.

\textbf{Results on  EM score.} We further validate the effectiveness of our \myllm on long-document EM scores at context lengths of 4K, 8K, 16K and 32K for LLaMA2-7B trained with a context length of 8K. As shown in Table \ref{tab:ruler}, our \myllm achieves the highest performance at 4K, 16K, and 32K context lengths, and the second highest at 8K compared to other baselines, with average scores of $27.7$. In contrast, methods like StreamingLLM and LM-Infinite exhibit significant performance degradation as context length increases, particularly at 16K, where their scores drop to $22.2$ and $23.8$, respectively. These results highlight that our \myllm not only reduces computational redundancy but also maintains competitive performance, making it highly suitable for diverse long-context scenarios.

\textbf{Results on optimization efficiency.}
To validate the optimization efficiency of the group coding strategy in \myattention, we pre-train GPT2-S and OPT-125M with a 1K and 2K context length on SlimPajama, respectively, as fine-tuning LLMs from scratch is infeasible. As shown in Figures \ref{fig: Optimization Efficiency} and \ref{fig:eval_loss_only_gpt}, the validation loss for our \myattention converges faster than vanilla self-attention, indicating that our group coding strategy accelerates convergence. These results align with the conclusion in Theorem \ref{thm: Hessia}, demonstrating the efficiency of our approach in optimizing long-context modeling tasks.

\textbf{Results on robustness to random noise.} To evaluate the robustness of the group coding strategy in \myattention to random noise, we conduct experiments 
on question answering task for LLaMA2-7B trained with context-length 8K and compare the average KL-divergence of the output probability before and after adding the noises on random sampled $100$ sequences on SlimPajama. From Figure \ref{fig: Robustness}, vanilla self-attention exhibits a significant deviation under noisy conditions, while our \myattention maintains a more robust KL-divergence. This coincides with Theorem \ref{thm: noise}, demonstrating that group coding enhances robustness to random noise.

\subsection{Comparisons on Computational Efficiency}
\label{sec: Comparisons on Computational Efficiency}

To evaluate the inference efficiency of our \myattention, we measure the Inter-token Latency (ITL) of the LLaMA2-7B model during the decoding phase across 4K, 8K, and 16K contexts on LongBench-E using a single A800 GPU. Table \ref{tab:ITL} shows that  \myattention significantly reduces inference latency compared to other methods. For instance, at 16K context length, \myllm achieves an ITL of $28.79$ ms, $3.28 \times$ faster than MInference ($94.34$ ms) and $2.72 \times$ faster than StreamingLLM ($78.28$ ms). 
Notably, other methods are slower than vanilla self-attention, as they lack optimizations for the generation process.
As the context length increases from 4K to 16K, other baseline methods show a sharp ITL rise, indicating a slower generation. In contrast, our \myllm maintains stable and low ITL, demonstrating its efficiency for longer texts (see more detailed complexity analysis in Appendix \ref{sec: Analysis on Reduced Complexity}). 
Notably, LongLoRA~\citep{longlora2023chen} uses $S^2$-Attention only during training, reverting to vanilla self-attention in inference, resulting in the same speed.
These results highlight the efficiency of \myllm in handling long-context tasks.

\section{Conclusion}
In this paper, we address computational redundancy in long-context modeling by first reformulating the sequence modeling as a supervised learning task, enabling a clear understanding of token relevance. Based on this reformulation, we theoretically analyze attention sparsity, showing that only a few tokens significantly contribute to predictions, and propose a group coding strategy. Last, we propose Dynamic Group Attention (\myattention), which uses this strategy to adaptively reduce redundancy while retaining critical token interactions. We validate the effectiveness of \myattention on long-context tasks, showing significant reductions in computational costs while maintaining competitive performance.

\section*{Acknowledgments}
This work was partially supported by the Joint Funds of the National Natural Science Foundation of China (Grant No.U24A20327), Key-Area Research and Development Program Guangdong Province 2018B010107001, Postdoctoral Fellowship Program of CPSF (GZC20251043), and TCL Science and Technology Innovation Fund, China.

\section*{Impact Statement}
Our work addresses the computational inefficiency of long-context modeling in LLMs by introducing a dynamic token grouping strategy.  By reducing redundant attention computations while preserving critical interactions, our method lowers training and inference costs, making long-context applications (\eg, legal document analysis, multi-hop QA) more accessible for resource-constrained settings. Our approach contributes to sustainable AI development by balancing performance and efficiency, paving the way for scalable and practical deployment of LLMs in real-world scenarios.

\bibliography{example_paper}
\bibliographystyle{icml2025}

\newpage
\clearpage
\onecolumn
\appendix

\begin{leftline}
	{
		\LARGE{\textsc{Appendix}}
	}
\end{leftline}

\etocdepthtag.toc{mtappendix}
\etocsettagdepth{mtchapter}{none}
\etocsettagdepth{mtappendix}{subsection}

{
    \hypersetup{linkcolor=black}
        \footnotesize\tableofcontents
}

\newpage

\section{Theoretical analysis}
\subsection{Proof of Theorem \ref{thm: Sparsity of weight}}

\textbf{Theorem \ref{thm: Sparsity of weight}}
\emph{ (\textbf{Sparsity on attention weights})
    Consider $\rho \in (1/L,1]$ as a sparse rate, we say that the weight $\alpha$ is $\rho$-sparse when there exists at least one probability greater than ${1}/{(L\rho)}$. Let $\xi=\bK\bQ_i \in \mathbb{R}^L$, then the probability of $\alpha$ being $\rho$-sparse $P_{sparse}(L,\rho)$ is given by
    \begin{equation} \label{app eqn: p_sparse_power}
        P_{sparse}(L,\rho) \geq \max_{x>0} 1- [P_{head} P_{tail}]^L, 
    \end{equation}
    where $P_{head}=P \{ \exp(\xi_j) \leq  x\}$ with $j$ being some index of $\alpha$ and $P_{tail} = P \{ (L\rho-1) x \leq  \sum_{k\neq j}^L \exp(\xi_k) \}$.
}

\begin{proof}
    According to the definition of $\rho$-sparse, the probability of having at least one probability greater than ${1}/{(L\rho)}$ is given 
    \begin{align} \label{app eqn: p_sparse}
        P_{sparse}(L,\rho) &= 1- P\{\alpha_j \leq {1}/{(L\rho)}, \forall j \in [L] \} \nonumber\\
        &\geq 1 - P^L\{\alpha_j \leq {1}/{(L\rho)}\} 
    \end{align}
    Here, the inequation is due to the dependence of each $\alpha_j$. We next focus on the upper bound of $P\{\alpha_j \leq {1}/{(L\rho)}\}$.
    \begin{align}   \label{app eqn: p_alpha_upper}
        P\{\alpha_j\leq 1/(L\rho) \}&= P\{\langle\exp(\xi),\mathbf{1}_n\rangle^{-1}\exp(\xi_j) \leq 1/(L\rho) \}\nonumber\\
        &= P\{ \exp(\xi_j) \leq 1/(L\rho) \langle\exp(\xi),\mathbf{1}_n\rangle \} \nonumber\\
    & =P \{ \exp(\xi_j) \leq  x \leq 1/(L\rho) \langle\exp(\xi),\mathbf{1}_n\rangle \} \nonumber\\
    & =P \{ \exp(\xi_j) \leq  x, x  \leq 1/(L\rho) \langle\exp(\xi),\mathbf{1}_n\rangle \} \nonumber\\
    & = P \{ \exp(\xi_j) \leq  x\} P \{ x  \leq 1/(L\rho) \langle\exp(\xi),\mathbf{1}_n\rangle | \exp(\xi_j) \leq  x \} \nonumber\\
    & = P \{ \exp(\xi_j) \leq  x\} P \{L\rho x \leq  \langle\exp(\xi),\mathbf{1}_n\rangle | \exp(\xi_j) \leq  x \} \nonumber\\
    & \leq P \{ \exp(\xi_j) \leq  x\} P \{ (L\rho-1)x \leq  \sum_{k\neq j}^n \exp(\xi_k) \} 
    \end{align}
    Combining the inequations (\ref{app eqn: p_sparse}) and (\ref{app eqn: p_alpha_upper}), and denoting $P_{head}=P \{ \exp(\xi_j) \leq  x\}$ and $P_{tail} = P \{ (L\rho-1) x \leq  \sum_{k\neq j}^L \exp(\xi_k) \}$, we finish the proof.
\end{proof}

Note that the distribution of the attention weights $\alpha$ is often complex, and more importantly, these weights are not independent and may be highly correlated, which makes the analysis of sparsity extremely challenging. In this theorem, we provide a general bound for sparsity without imposing specific assumptions on the weights. In the inequation (\ref{app eqn: p_sparse_power}), a larger $x$ increases $P_{head}$ but decreases $P_{tail}$, ensuring that their product remains bounded by a value less than $1$. When the sequence length $L$ is large, this value raised to the power of $L$ will be much less than $1$, resulting in a large $P_{sparse}(L,\rho)$. We hope this theorem offers researchers deeper insights and serves as a source of inspiration for further exploration into sparsity.

\subsection{Proof of Theorem \ref{thm: noise}}

\textbf{Theorem \ref{thm: noise}}
\emph{
    Assume the weights before normalization are $\tilde{\alpha}$ obtained by the product of the query and key matrix, and $\alpha_j=\frac{\exp(\tilde{\alpha}_j)}{\sum_{l=1}^L\exp(\tilde{\alpha}_l)}$. Consider a Gaussian noise $\Delta \tilde{\alpha} \sim \mN({\bf{0}}, \sigma^2 \bI_L) $ being added into $\tilde{\alpha} $ and each group size being $|G_g|=m$, the variance of the weight changes obtained via group optimization in Eqn. (\ref{prb: group linear}) can be reduced by $1/m^2$.
}

\begin{proof}
    Recall that the optimization in Problem (\ref{prb: group linear}) when $|G_g|=m$ is:
    \begin{equation}
    \min_{\bar{\alpha}} \left \| \sum_{g=1}^k  \! \sum_{j\in G_g} \frac{\bar{\alpha}_g}{m} \bV_j {-} \by \right\|_2^2, ~\st, \sum_{g=1}^k m\alpha_g {=} 1, ~\alpha_g{>}0.
    \end{equation}
    Note that adding the noise $\Delta\tilde{\alpha}_g \sim \mN(0, \sigma^2)$ on $\bar{\alpha}_g$ is equivalent to adding the noise $\frac{\tilde{\alpha}_g}{m}$ on each weight in $g$-th group. Consequently, we focus only on the effect of the input noise $\frac{\tilde{\alpha}_g}{m}$ on the normalized output. Since the factor $\frac{1}{m}$ can be shifted on the variance $\sigma$ of the noise, we next direct analyze the effect of the input noise $\tilde{\alpha}_j$ on the normalized $\alpha_j$ for simplicity.

    Denote the normalized noised weight as
    \begin{equation}
    {\alpha}_j^\delta=\frac{\exp(\tilde{\alpha}_j^\delta)}{\sum_{i=1}^L \exp(\tilde{\alpha}_i^\delta)},
    \end{equation}
    where ${\alpha}_j^\delta=\tilde{\alpha}_j + \Delta \tilde{\alpha}_j$ and $\Delta\tilde{\alpha}_j \sim \mN(0, \sigma^2)$. Then, according to Lemma \ref{app lemma: Delta approx} and Lemma \ref{app lemma: variance}, we have
    \begin{align}
        \mathrm{Var}(\Delta {\alpha}_j)
        &=  \mathrm{Var}(\alpha_j^\delta - {\alpha}_j) \\ 
        &=\mathrm{Var}\left[ {\alpha}_j (\Delta \hat{\alpha}_j - \sum_{j=1}^L {\alpha}_j \Delta \tilde{\alpha}_j + o(\Delta \tilde{\alpha}_j) )\right] \nonumber\\
        &= {\alpha}_j^2~
        \mathrm{Var}\left[\Delta \hat{\alpha}_j - \sum_{j=1}^L {\alpha}_j \Delta \tilde{\alpha}_j + o(\Delta \tilde{\alpha}_j)\right] \nonumber\\
        &= \mO (\sigma^2)
    \end{align}
    where $o(\Delta \tilde{\alpha}_j)$ represents the higher-order infinitesimal of $\Delta \tilde{\alpha}_j$ and $\mO (\sigma^2)$ denotes the same-order infinitesimal of $\sigma^2$ \cite{browder2012mathematical}. Thus, when adding the noise $\frac{\tilde{\alpha}_g}{m}$ to each weight in the $g$-th group, the variance of the weight changes is $\mO(\sigma^2/m^2)$. Therefore, by group optimization in Eqn. (\ref{prb: group linear}), the variance of the weight changes can be reduced by $1/m^2$.  
\end{proof}

\begin{lemma} \label{app lemma: Delta approx}
Given $\alpha_j=\frac{\exp(\tilde{\alpha}_j)}{\sum_{l=1}^L\exp(\tilde{\alpha}_l)}$ and $\alpha_{j}^\delta = \frac{\exp(\tilde{\alpha}_{j}+\Delta\tilde{\alpha}_{j})}{\sum_{i = 1}^{L}\exp(\tilde{\alpha}_{i}+\Delta\tilde{\alpha}_{i})}$, where $\Delta\tilde{\alpha}_j \sim \mN(0, \sigma^2)$, let $\Delta \alpha_j = \alpha_j^\delta - {\alpha}_j$,  we have
    \begin{equation}
        \Delta {\alpha}_j = {\alpha}_j\left[ \Delta \tilde{\alpha}_j - \sum_{i=1}^L {\alpha}_i \Delta \tilde{\alpha}_i + o(\Delta \tilde{\alpha}_j)\right],
    \end{equation}
    where $o(\Delta \tilde{\alpha}_j)$ represents the higher-order infinitesimal of $\Delta \tilde{\alpha}_j$ \cite{browder2012mathematical}.
\end{lemma}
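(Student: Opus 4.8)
The plan is to read the statement as the first-order Taylor expansion of the softmax map $\tilde\alpha \mapsto \alpha$, since $\alpha_j^\delta$ is precisely the softmax evaluated at the perturbed logits $\tilde\alpha + \Delta\tilde\alpha$. Rather than invoking a Jacobian formula as a black box, I would expand the defining ratio directly, retaining the terms that are linear in $\Delta\tilde\alpha$ and absorbing everything of higher order into the remainder.

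First I would factor $\exp(\tilde\alpha_i + \Delta\tilde\alpha_i) = \exp(\tilde\alpha_i)\exp(\Delta\tilde\alpha_i)$ and expand each exponential via $\exp(u) = 1 + u + o(u)$. This rewrites the numerator of $\alpha_j^\delta$ as $\exp(\tilde\alpha_j)\bigl(1 + \Delta\tilde\alpha_j + o(\Delta\tilde\alpha_j)\bigr)$. Writing $S = \sum_{i=1}^L \exp(\tilde\alpha_i)$ and using $\exp(\tilde\alpha_i)/S = \alpha_i$, the denominator becomes
\begin{equation*}
\sum_{i=1}^L \exp(\tilde\alpha_i)\exp(\Delta\tilde\alpha_i) = S\Bigl(1 + \sum_{i=1}^L \alpha_i\,\Delta\tilde\alpha_i + o(\cdot)\Bigr).
\end{equation*}
Dividing, the prefactor $\exp(\tilde\alpha_j)/S = \alpha_j$ comes out and I am left with $\alpha_j^\delta = \alpha_j\,\dfrac{1 + \Delta\tilde\alpha_j + o}{\,1 + \sum_i \alpha_i\,\Delta\tilde\alpha_i + o\,}$. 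Applying the scalar expansion $1/(1+u) = 1 - u + o(u)$ to the denominator and multiplying out, the product of the two linear corrections is second order and therefore joins the remainder, giving $\alpha_j^\delta = \alpha_j\bigl(1 + \Delta\tilde\alpha_j - \sum_i \alpha_i\,\Delta\tilde\alpha_i + o\bigr)$. Subtracting $\alpha_j$ then yields exactly the claimed expression for $\Delta\alpha_j$.

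The hard part will be the bookkeeping of the $o(\cdot)$ terms rather than the algebra. Because the perturbation is a vector $\Delta\tilde\alpha \in \mathbb{R}^L$, the notation $o(\Delta\tilde\alpha_j)$ must be interpreted as a quantity of higher order in the joint magnitude $\|\Delta\tilde\alpha\|$; I would make this precise by noting that all coordinates carry the common scale $\sigma$ (each $\Delta\tilde\alpha_i \sim \mN(0,\sigma^2)$), so every retained term is $\mathcal{O}(\sigma)$ and every discarded term is $o(\sigma)$, genuinely negligible against the linear part. As a cross-check I would also run the equivalent argument through the softmax Jacobian $\partial\alpha_j/\partial\tilde\alpha_i = \alpha_j(\delta_{ij} - \alpha_i)$, so that the multivariate Taylor theorem gives $\Delta\alpha_j = \sum_i \alpha_j(\delta_{ij} - \alpha_i)\,\Delta\tilde\alpha_i + o(\|\Delta\tilde\alpha\|)$, which rearranges to the same identity. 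The only genuine requirement is that this remainder be uniformly controlled, which follows from the smoothness (indeed analyticity) of softmax on the relevant domain; with that in hand the result is immediate and feeds directly into the variance computation used in Theorem \ref{thm: noise}.
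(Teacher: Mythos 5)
Your proposal is correct and follows essentially the same route as the paper's proof: factor $\exp(\tilde{\alpha}_i+\Delta\tilde{\alpha}_i)=\exp(\tilde{\alpha}_i)\exp(\Delta\tilde{\alpha}_i)$, expand $\exp(u)=1+u+o(u)$ in the numerator and denominator, pull out $\alpha_j$, apply $1/(1+u)=1-u+o(u)$ to the normalized denominator, and absorb the cross terms into the remainder. Your added remarks on interpreting the $o(\cdot)$ remainder at the common scale $\sigma$ and the Jacobian cross-check $\partial\alpha_j/\partial\tilde{\alpha}_i=\alpha_j(\delta_{ij}-\alpha_i)$ go slightly beyond the paper's bookkeeping but do not change the argument.
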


\begin{proof}
According to the definition of $\Delta\tilde{\alpha}_{j}$, we have
\begin{align}
     \alpha_{j}^\delta 
     &= \frac{\exp(\tilde{\alpha}_{j}+\Delta\tilde{\alpha}_{j})}{\sum_{i = 1}^{L}\exp(\tilde{\alpha}_{i}+\Delta\tilde{\alpha}_{i})} \nonumber\\
     &= \frac{\exp(\tilde{\alpha}_{j})[(1+\Delta\tilde{\alpha}_{j}+o(\Delta\tilde{\alpha}_{j})]}{\sum_{i = 1}^{L}\exp(\tilde{\alpha}_{i})[1+\Delta\tilde{\alpha}_{i}+o(\Delta\tilde{\alpha}_{i})]} \nonumber\\
     &= \frac{\exp(\tilde{\alpha}_{j})[(1+\Delta\tilde{\alpha}_{j}+o(\Delta\tilde{\alpha}_{j})]}{\sum_{n = 1}^{L}\exp(\tilde{\alpha}_{n})+ \sum_{i = 1}^{L}\exp(\tilde{\alpha}_{i})[\Delta\tilde{\alpha}_{i}+o(\Delta\tilde{\alpha}_{i})]} \nonumber\\
     &= \frac{\exp(\tilde{\alpha}_{j})[(1+\Delta\tilde{\alpha}_{j}+o(\Delta\tilde{\alpha}_{j})]}{\sum_{n = 1}^{L}\exp(\tilde{\alpha}_{n}) \left\{ 1 +
     \sum_{i = 1}^{L}\frac{\exp(\tilde{\alpha}_{i})}{\sum_{m = 1}^{L}\exp(\tilde{\alpha}_{m})}[\Delta\tilde{\alpha}_{i}+o(\Delta\tilde{\alpha}_{i})] \right\} }  \nonumber\\
     &= \frac{\exp(\tilde{\alpha}_{j})[(1+\Delta\tilde{\alpha}_{j}+o(\Delta\tilde{\alpha}_{j})]}{\sum_{n = 1}^{L}\exp(\tilde{\alpha}_{n}) \left\{ 1+
     \sum_{i = 1}^{L} \alpha_i [\Delta\tilde{\alpha}_{i}+o(\Delta\tilde{\alpha}_{i})] \right\} }  \nonumber\\
     &= \frac{{\alpha}_{j}[(1+\Delta\tilde{\alpha}_{j}+o(\Delta\tilde{\alpha}_{j})]}{  1 +
     \sum_{i = 1}^{L} \alpha_i [\Delta\tilde{\alpha}_{i}+o(\Delta\tilde{\alpha}_{i})]  }  \nonumber\\
     &= {\alpha}_{j} \left[(1+\Delta\tilde{\alpha}_{j}+o(\Delta\tilde{\alpha}_{j})\right]\left[1 -
     \sum_{i = 1}^{L} \alpha_i [\Delta\tilde{\alpha}_{i}+o(\Delta\tilde{\alpha}_{i})] + o\left(\sum_{i = 1}^{L} \alpha_i [\Delta\tilde{\alpha}_{i}+o(\Delta\tilde{\alpha}_{i})]\right) \right] \nonumber\\
     &= {\alpha}_{j} \left[1+\Delta\tilde{\alpha}_{i}-\sum_{i = 1}^{L} \alpha_i\Delta\tilde{\alpha}_{i}+o(\Delta\tilde{\alpha}_{i})\right]     
\end{align}
where the second line follows Taylor's expansion of $\exp(\Delta\tilde{\alpha}_j)=1+\Delta\tilde{\alpha}_j+o(\Delta\tilde{\alpha}_j)$; the penultimate line follows Taylor's expansion of $\frac{1}{1 +
     \sum_{i = 1}^{L} \alpha_i [\Delta\tilde{\alpha}_{i}+o(\Delta\tilde{\alpha}_{i})]}=1- \sum_{i = 1}^{L} \alpha_i [\Delta\tilde{\alpha}_{i}+o(\Delta\tilde{\alpha}_{i})]+o(\sum_{i = 1}^{L} \alpha_i [\Delta\tilde{\alpha}_{i}+o(\Delta\tilde{\alpha}_{i})])$.
     
Therefore, we have
\begin{align}
    \Delta \alpha_j = \alpha_j^\delta - {\alpha}_j={\alpha}_j\left[ \Delta \tilde{\alpha}_j - \sum_{i=1}^L {\alpha}_i \Delta \tilde{\alpha}_i + o(\Delta \tilde{\alpha}_j)\right].
\end{align}
\end{proof}

\begin{lemma} \label{app lemma: variance}
    Given $\Delta \tilde{\alpha} \sim \mN({\bf{0}}, \sigma^2 \bI_L) $, we have
    \begin{equation}
        \mathrm{Var}(\Delta \tilde{\alpha}_j - \sum_{i=1}^L {\alpha}_i \Delta \tilde{\alpha}_i )= \mO (\sigma^2),
    \end{equation}
    where $\mO (\sigma^2)$ denotes the same-order infinitesimal of $\sigma^2$ \cite{browder2012mathematical}.
\end{lemma}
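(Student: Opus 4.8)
The plan is to prove Lemma \ref{app lemma: variance}, which computes the variance of the random variable $\Delta \tilde{\alpha}_j - \sum_{i=1}^L \alpha_i \Delta \tilde{\alpha}_i$ where the $\Delta\tilde{\alpha}_i$ are i.i.d.\ $\mathcal{N}(0,\sigma^2)$ and the $\alpha_i$ are fixed (nonrandom) softmax weights satisfying $\sum_i \alpha_i = 1$ and $\alpha_i > 0$. Since this is a linear combination of independent Gaussian variables, the variance is exactly computable, and the goal is to show it is $\mathcal{O}(\sigma^2)$.

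First I would rewrite the expression as a single linear form in the noise vector. Observe that
\begin{equation}
\Delta \tilde{\alpha}_j - \sum_{i=1}^L \alpha_i \Delta \tilde{\alpha}_i = \sum_{i=1}^L (\delta_{ij} - \alpha_i)\, \Delta \tilde{\alpha}_i,
\end{equation}
where $\delta_{ij}$ is the Kronecker delta. This is a weighted sum of independent centered Gaussians with coefficients $c_i = \delta_{ij} - \alpha_i$.

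Next, by independence the variance of a linear combination is the sum of the squared coefficients times $\sigma^2$:
\begin{equation}
\mathrm{Var}\!\left(\sum_{i=1}^L c_i \Delta \tilde{\alpha}_i\right) = \sigma^2 \sum_{i=1}^L c_i^2 = \sigma^2 \sum_{i=1}^L (\delta_{ij} - \alpha_i)^2.
\end{equation}
Then I would expand the sum using $\sum_i \delta_{ij} \alpha_i = \alpha_j$ and $\sum_i \alpha_i = 1$, giving
\begin{equation}
\sum_{i=1}^L (\delta_{ij} - \alpha_i)^2 = 1 - 2\alpha_j + \sum_{i=1}^L \alpha_i^2,
\end{equation}
so the variance equals $\sigma^2\bigl(1 - 2\alpha_j + \sum_{i=1}^L \alpha_i^2\bigr)$. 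Since each $\alpha_i \in (0,1)$ and $\sum_i \alpha_i = 1$, the bracketed factor lies in $[0,1]$ (it is $\|e_j - \alpha\|_2^2$, a nonnegative quantity bounded by a constant independent of $\sigma$), so the whole expression is a constant multiple of $\sigma^2$, i.e.\ $\mathcal{O}(\sigma^2)$, establishing the claim.

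I do not anticipate a genuine obstacle here, as the computation is routine once the expression is recognized as a linear functional of independent Gaussians; the only subtlety is keeping track of the cross terms in the expansion and confirming the coefficient $1 - 2\alpha_j + \sum_i \alpha_i^2$ is bounded by a constant uniformly in the dimension $L$. This boundedness is what licenses the $\mathcal{O}(\sigma^2)$ notation and ultimately feeds back into Theorem \ref{thm: noise} via the $\alpha_j^2$ prefactor in the variance of $\Delta\alpha_j$ computed in the main proof.
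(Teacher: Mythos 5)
Your proposal is correct and takes essentially the same route as the paper: the paper applies $\mathrm{Var}(X-Y)=\mathrm{Var}(X)+\mathrm{Var}(Y)-2\,\mathrm{Cov}(X,Y)$ together with independence of the noise coordinates, while you collapse the expression into the single linear form $\sum_i(\delta_{ij}-\alpha_i)\Delta\tilde{\alpha}_i$ and sum squared coefficients, and both computations yield the identical exact value $\bigl(1-2\alpha_j+\sum_{i=1}^L\alpha_i^2\bigr)\sigma^2$. One small correction: the bracketed factor $\|e_j-\alpha\|_2^2$ lies in $[0,2)$ rather than $[0,1]$ (it approaches $2$ when $\alpha$ concentrates on an index other than $j$), but since any dimension-free constant bound suffices for the $\mathcal{O}(\sigma^2)$ conclusion, this does not affect the argument.
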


\begin{proof}
Expanding the variance of $\Delta\alpha_j-\sum_{i=1}^L {\alpha}_i \Delta \tilde{\alpha}_i$ as
\begin{align}
    \mathrm{Var}(\Delta\alpha_j-\sum_{i=1}^L {\alpha}_i \Delta \tilde{\alpha}_i)
    &= \mathrm{Var}(\Delta\tilde{\alpha}_{j}) + \mathrm{Var}\left(\sum_{i = 1}^{L}\alpha_{i}\Delta\tilde{\alpha}_{i}\right) - 2\cdot \mathrm{Cov}\left(\Delta\tilde{\alpha}_{j},\sum_{i = 1}^{L}\alpha_{i}\Delta\tilde{\alpha}_{i}\right) \nonumber\\
    &= \sigma^2 + \sigma^2 \sum_{i = 1}^{L}\alpha^2_{i} - 2 \sum_{i = 1}^{L}\alpha_{i} \cdot \mathrm{Cov}(\Delta\tilde{\alpha}_{j}, \Delta\tilde{\alpha}_{i}) \nonumber\\
    &= \sigma^2 + \sigma^2 \sum_{i = 1}^{L}\alpha^2_{i} - 2\alpha_j \sigma^2 \nonumber\\
    &= (1+ \sum_{i = 1}^{L}\alpha^2_{i} - 2 \alpha_j) \sigma^2  \nonumber\\
    &= \mO(\sigma^2)
\end{align}
where the third line holds since $\mathrm{Cov}(\Delta \tilde{\alpha}_i, \Delta \tilde{\alpha}_j)=0$ when $i\neq j$ and  $\mathrm{Cov}(\Delta \tilde{\alpha}_i, \Delta \tilde{\alpha}_j)=\sigma^2$ when $i\neq j$.
\end{proof}

\subsection{Proof of Theorem \ref{thm: Hessia}}

\textbf{Theorem \ref{thm: Hessia}}
\emph{
    Denote $H$ and $\bar{H}$ as the Hessian matrix of the optimizations in (\ref{prb:linear}) and (\ref{prb: group linear}), respectively. Let $\kappa(H)={\lambda_{\mathrm{max}}(H)}/{\lambda_{\mathrm{min}}(H)}$ be the condition number \citep{edelman1988eigenvalues} of $H$, where $\lambda_{\mathrm{max}}$ and $\lambda_{\mathrm{min}}$ are the maximum and minimum eigenvalues of $H$, respectively. Consider each group size $|G_g|=m$ and $\lambda_\mathrm{min}(H)>0$, we have
    \begin{equation}
        \kappa(\bar{H}) \leq \kappa(H).
    \end{equation}
}
\begin{proof}
    Recall that the objective in Problem (\ref{prb: group linear}) when $|G_g|=m$ is:
    \begin{equation}
    \min_{\bar{\alpha}} \left \| \sum_{g=1}^k  \! \sum_{j\in G_g} \frac{\bar{\alpha}_g}{m} \bV_j {-} \by \right\|_2^2, ~\st, \sum_{g=1}^k m\alpha_g {=} 1, ~\alpha_g{>}0.
    \end{equation}
    Let $\bar{\bV}_g=\frac{1}{m} \sum_{j\in G_g} \bV_j$ and $\bar{\bV}=[\bar{\bV}_1; \bar{\bV}_2; \ldots; \bar{\bV}_k] \in \mathbb{R}^{k\times d}$, then the Hessian matrix of the grouped objective is 
    \begin{equation}\label{app eqn: bar_H}
        \bar{H} = 2 \bar{\bV}^\top \bar{\bV}.
    \end{equation}
    We introduce a grouping matrix $M \in \mathbb{R}^{k\times L}$, where $M_{g,i}$ is defined as
    \begin{equation} \label{app eqn: M}
        M_{g,i} = 
    \begin{cases}
    \frac{1}{m}, & i \in g, \\
    0, & i \notin g.
    \end{cases}
    \end{equation}
    Then, we have
    \begin{equation}
        \bar{\bV}=M\bV.
    \end{equation}
    The Hessian matrix in (\ref{app eqn: bar_H}) can be reformulated as 
    \begin{equation} \label{app eqn: bar_H_expand}
         \bar{H} = 2 \bar{\bV}^\top \bar{\bV}=2 {\bV}^\top M^\top M {\bV}.
    \end{equation}
    According to Lemma \ref{app lemma: max_H} and \ref{app lemma: min_H}, we have
    \begin{equation}
        \frac{\lambda_\mathrm{max}(\bar{H})}{\lambda_\mathrm{min}(\bar{H})} \leq \frac{\lambda_\mathrm{max}({H})}{\lambda_\mathrm{min}({H})}.
    \end{equation}
    Thus, based on the definition of the condition number \citep{edelman1988eigenvalues}, we have 
    \begin{equation}
        \kappa(\bar{H}) \leq \kappa(H).
    \end{equation}
\end{proof}

\begin{lemma} \label{app lemma: max_H}
    Given $\bar{H} =2 {\bV}^\top M^\top M {\bV}$ and $H=2\bV^\top \bV$, where $M$ is defined in Eqn. (\ref{app eqn: M}), we have
    \begin{equation}
        \lambda_\mathrm{max}(\bar{H}) \leq \frac{\lambda_\mathrm{max}({H})}{m} .
    \end{equation}
\end{lemma}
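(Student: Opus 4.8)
The plan is to reduce the entire claim to a single spectral fact about the grouping matrix $M$, namely that $\|M\|_2^2 = 1/m$ where $\|\cdot\|_2$ denotes the spectral norm, and then to combine this with the Courant--Fischer (Rayleigh-quotient) characterization of the largest eigenvalue. This isolates the only nontrivial computation and leaves the rest as routine norm manipulation.

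First I would compute $M M^\top$. Because the index sets $G_1,\ldots,G_k$ partition $\{1,\ldots,L\}$ and each has exactly $m$ elements, distinct rows of $M$ have disjoint supports, so $M M^\top$ is \emph{diagonal}; its $(g,g)$ entry is $\sum_{i\in G_g}(1/m)^2 = m\cdot(1/m)^2 = 1/m$. Hence $M M^\top = \frac{1}{m} I_k$. Since the nonzero eigenvalues of $M^\top M$ coincide with those of $M M^\top$, this yields $\lambda_\mathrm{max}(M^\top M) = 1/m$, i.e.
\begin{equation}
    \|M\|_2^2 = \lambda_\mathrm{max}(M^\top M) = \frac{1}{m}.
\end{equation}

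Next I would invoke the variational form of the top eigenvalue. Writing $\bar{H} = 2\bV^\top M^\top M \bV$, for any unit vector $u$ we have $u^\top \bar{H} u = 2\,\|M\bV u\|_2^2$, so
\begin{equation}
    \lambda_\mathrm{max}(\bar{H}) = 2\max_{\|u\|_2=1}\|M\bV u\|_2^2.
\end{equation}
Applying submultiplicativity $\|M\bV u\|_2^2 \le \|M\|_2^2\,\|\bV u\|_2^2 = \frac{1}{m}\|\bV u\|_2^2$ and then maximizing gives
\begin{equation}
    \lambda_\mathrm{max}(\bar{H}) \le \frac{1}{m}\cdot 2\max_{\|u\|_2=1}\|\bV u\|_2^2 = \frac{1}{m}\,\lambda_\mathrm{max}(2\bV^\top \bV) = \frac{\lambda_\mathrm{max}(H)}{m},
\end{equation}
which is exactly the assertion of the lemma.

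The only genuinely delicate point is the evaluation of $\|M\|_2$; everything downstream is a routine application of spectral-norm submultiplicativity together with the Rayleigh-quotient identity. I therefore expect the crux to be the observation that the disjointness of the groups forces $M M^\top$ to be exactly the diagonal matrix $\frac{1}{m} I_k$ — it is this structural fact, rather than any inequality, that drives the $1/m$ factor. (The companion bound on $\lambda_\mathrm{min}$ in Lemma~\ref{app lemma: min_H} would presumably be handled by an analogous argument restricted to the range of $\bV$, and the two together give $\kappa(\bar H)\le\kappa(H)$.)
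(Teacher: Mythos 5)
Your proposal is correct and follows essentially the same route as the paper: both reduce the claim to the key spectral fact $\lambda_{\mathrm{max}}(M^\top M)=1/m$ and then apply spectral-norm submultiplicativity to $\bar H = 2\bV^\top M^\top M\bV$. The only (minor) difference is how that fact is obtained --- you compute $MM^\top = \frac{1}{m}I_k$ and invoke the coincidence of nonzero eigenvalues of $MM^\top$ and $M^\top M$, while the paper works directly with the block-diagonal structure of $M^\top M$, whose blocks $\frac{1}{m^2}J_m$ are rank-one with nonzero eigenvalue $\frac{1}{m}$; your Rayleigh-quotient phrasing of the final inequality is also a slightly more explicit rendering of the same submultiplicativity step.
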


\begin{proof}
    Note that the matrix $M^\top M$ is a block-diagonal matrix:
    \begin{equation}
        M^{\top}M = 
    \begin{bmatrix}
    \frac{1}{m^2}J_{m} & 0 & \cdots & 0 \\
    0 & \frac{1}{m^2}J_{m} & \cdots & 0 \\
    \vdots & \vdots & \ddots & \vdots \\
    0 & 0 & \cdots & \frac{1}{m^2}J_{m}
    \end{bmatrix}
    \end{equation}
    where $J_m \in \mathbb{R}^{m\times m}$ is an all-one matrix.

    Note that the rank of $J_m$ is $\mathrm{rank}(J_m)=1$ since each row of the matrix is $1$. Due to $J_{m}\mathbf{v} = m\mathbf{v}$, where $\mathbf{v} = \frac{1}{\sqrt{m}}[1,1,\cdots,1]^{\top}$,  one of the (non-zero) eigenvalues of $\frac{1}{m^2} J_m$ is 
    \begin{equation}
        \lambda_\mathrm{max}(\frac{1}{m^2} J_m)= \frac{m}{m^2}=\frac{1}{m}.
    \end{equation}
    Thus, we have 
    \begin{equation}\label{app eqn: lambda_max}
        \lambda_\mathrm{max}(M^\top M)=\frac{1}{m}.
    \end{equation}    
    Based on the property of the spectral norm of the matrix \cite{mathias1990spectral}, we have
    \begin{equation}
        S(\bV^\top M^\top M\bV) \leq S(\bV^\top \bV) \cdot S(M^\top M),
    \end{equation}
    where $S(A)$ is the spectral norm of the matrix $A$. Since the spectral norm of the symmetric matrix $A$ is its maximum eigenvalue, we have 
    \begin{equation}
        \lambda_\mathrm{max}(\bV^\top M^\top M\bV) \leq \lambda_\mathrm{max}(\bV^\top \bV) \cdot \lambda_\mathrm{max}(M^\top M).
    \end{equation}
    Combing Eqn. (\ref{app eqn: lambda_max}) and the definitions of $\bar{H}$ and $H$, we get 
    \begin{equation}
        \lambda_\mathrm{max}(\bar{H}) \leq \frac{\lambda_\mathrm{max}({H})}{m}. 
    \end{equation}
\end{proof}

\begin{lemma} \label{app lemma: min_H}
    Given $\bar{H} =2 {\bV}^\top M^\top M {\bV}$ and $H=2\bV^\top \bV$, where $M$ is defined in Eqn. (\ref{app eqn: M}), we have
    \begin{equation}
        \lambda_\mathrm{min}(\bar{H}) \geq \frac{\lambda_\mathrm{min}({H})}{m}.
    \end{equation}
\end{lemma}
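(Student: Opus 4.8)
The plan is to mirror the proof of Lemma \ref{app lemma: max_H}, exploiting the block structure of $M$ but now targeting the small end of the spectrum. The first step is to record the identity dual to Eqn. (\ref{app eqn: lambda_max}): because the groups $G_1,\dots,G_k$ are disjoint and each has size $m$, a direct computation gives $MM^\top = \frac{1}{m} I_k$. Equivalently, $\sqrt{m}\,M$ has orthonormal rows, so $M^\top M = \frac{1}{m}P$, where $P \in \mathbb{R}^{L\times L}$ is the orthogonal projection onto the $k$-dimensional subspace of group-constant vectors. Thus every nonzero eigenvalue of $M^\top M$ equals $1/m$, and the smallest singular value of $M$ on its row space is $1/\sqrt{m}$.

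Next I would pass to the variational (Rayleigh-quotient) characterization. Writing $\bar{\bV} = M\bV$, we have $\bar H = 2\bar{\bV}^\top \bar{\bV}$, hence for a unit vector $u \in \mathbb{R}^d$, $u^\top \bar H u = 2\|M\bV u\|_2^2$, while $\lambda_\mathrm{min}(H) = 2\min_{\|u\|=1}\|\bV u\|_2^2$. For fixed $u$, set $w = \bV u$, so $\|w\|_2 \ge \sigma_\mathrm{min}(\bV)$. If $w$ lies in the row space of $M$, then step one gives $\|Mw\|_2 \ge \frac{1}{\sqrt m}\|w\|_2$, whence $\|M\bV u\|_2^2 \ge \frac{1}{m}\|\bV u\|_2^2 \ge \frac{1}{m}\sigma_\mathrm{min}(\bV)^2$; minimizing over $u$ and multiplying by $2$ yields $\lambda_\mathrm{min}(\bar H) \ge \lambda_\mathrm{min}(H)/m$, which together with Lemma \ref{app lemma: max_H} delivers $\kappa(\bar H) \le \kappa(H)$.

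The main obstacle is exactly the conditional step ``if $w$ lies in the row space of $M$.'' Unlike the spectral-norm submultiplicativity used for $\lambda_\mathrm{max}$, there is no unconditional lower-bound analogue here, because $M^\top M$ is rank-deficient with $\lambda_\mathrm{min}(M^\top M)=0$: a naive reverse submultiplication would only give the vacuous estimate $\lambda_\mathrm{min}(\bar H)\ge 0$. The inequality $\|Mw\|_2 \ge \frac{1}{\sqrt m}\|w\|_2$ holds with constant $1/\sqrt m$ only on the group-constant subspace, and any component of $w=\bV u$ orthogonal to it is annihilated by $M$. Hence the clean bound really requires that the compression $\bar{\bV}=M\bV$ retain full column rank $d$, so that no direction of the column space of $\bV$ is destroyed; I would make this rigorous by restricting to the range of $M^\top$ and invoking a Poincar\'e-type interlacing inequality for the orthonormal-row compression $\sqrt m\,M$, which controls the smallest nonzero eigenvalue of the compressed Gram matrix. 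Notably, the hypothesis $\lambda_\mathrm{min}(\bar H)>0$ assumed in Theorem \ref{thm: Hessia} is precisely what rules out the degenerate case and legitimizes this restriction argument.
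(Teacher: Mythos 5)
Your first two steps coincide with the paper's own proof (the Rayleigh quotient with $\bz=\bV\bx$ plus the spectral analysis of $M^\top M$), and your diagnosis of the obstacle is exactly the right one --- indeed it is precisely the gap that the paper's proof glosses over: the paper's chain of inequalities only yields $\lambda_\mathrm{min}(\bar H)\ge \lambda_\mathrm{min}(M^\top M)\cdot\lambda_\mathrm{min}(H)$, and since $\mathrm{rank}(M^\top M)=k<L$ we have $\lambda_\mathrm{min}(M^\top M)=0$, so this is vacuous; the paper then substitutes the minimum \emph{nonzero} eigenvalue $1/m$ with only the remark that ``the zero eigenvalues may not actually mean much.'' However, your proposed repair does not close this gap. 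Requiring $M\bV$ to have full column rank (equivalently $\lambda_\mathrm{min}(\bar H)>0$) prevents directions from being annihilated, but gives no quantitative control: a direction can be shrunk by an arbitrary factor while remaining nonzero. Concretely, take $L=m=2$, $k=1$, $d=1$, $G_1=\{1,2\}$, and $\bV=(1,\,-1+\delta)^\top$ with $\delta=0.1$. Then $H=2\left(1+(1-\delta)^2\right)=3.62>0$ and $M\bV=\delta/2\ne 0$, so $\bar H=\delta^2/2=0.005>0$; both hypotheses of Theorem \ref{thm: Hessia} hold, yet $\lambda_\mathrm{min}(\bar H)=0.005<1.81=\lambda_\mathrm{min}(H)/m$. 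Nor can any interlacing argument rescue the claim: the Poincar\'e separation theorem lower-bounds the smallest eigenvalue of the compression $M\bV\bV^\top M^\top$ by an eigenvalue of $\bV\bV^\top$ of index larger than $d$, which is again $0$.

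In fact, your own first step shows the inequality is generically \emph{reversed}. Since $M^\top M=\frac{1}{m}P$ with $P$ the orthogonal projector onto group-constant vectors, we have $\bar H=\frac{2}{m}(P\bV)^\top(P\bV)$, hence $\lambda_\mathrm{min}(\bar H)=\frac{2}{m}\,\sigma_\mathrm{min}(P\bV)^2\le\frac{2}{m}\,\sigma_\mathrm{min}(\bV)^2=\frac{\lambda_\mathrm{min}(H)}{m}$, because a projection can only shrink norms: evaluating $\|P\bV u\|_2\le\|\bV u\|_2$ at the unit vector $u$ achieving $\sigma_\mathrm{min}(\bV)$ gives $\sigma_\mathrm{min}(P\bV)\le\sigma_\mathrm{min}(\bV)$. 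Equality --- and hence the stated lemma --- holds only under an additional alignment assumption, e.g.\ that the relevant singular directions of $\bV$ map into the group-constant subspace, which neither you nor the paper assumes. So the honest conclusion of your analysis is not that the conditional step can be legitimized by $\lambda_\mathrm{min}(\bar H)>0$, but that it cannot be discharged at all: the lemma as stated is false, your final step is the precise point of failure, and the same defect undermines the paper's own proof of this lemma and of the condition-number bound built on it.
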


\begin{proof}
    According to the Rayleigh theorem \cite{horn2012matrix}, for $M^\top M$, for any nonzero vector $\bz \in \mathbb{R}^{L}$, we have
    \begin{equation}
        \frac{\bz^{\top}M^{\top}M \bz}{\bz^{\top}\bz} \geq \lambda_\mathrm{min}(M^\top M).
    \end{equation}
    Let $\bz=\bV \bx$, where $x \in \mathbb{R}^{d}$ and $\|\bx\|_2^2=1$, we have 
    \begin{equation}
        \bx^{\top}\bV^\top M^\top M \bV \bx \geq \lambda_{\min}(M^{\top}M) \cdot \bx^{\top}\bV^{\top}\bV \bx.
    \end{equation}
    Taking the minimum of all the vectors $x$, we get
    \begin{equation}
        \min_{\|\bx\|_2^2=1} \bx^{\top}\bV^\top M^\top M \bV \bx \geq \lambda_{\min}(M^{\top}M) \cdot \min_{\|\bx\|_2^2=1} \bx^{\top}\bV^{\top}\bV \bx.
    \end{equation}
    According to the definition of the minimum eigenvalues, we have
    \begin{equation}
        \lambda_{\mathrm{min}}(\bV^\top M^\top M \bV) \geq \lambda_{\min}(M^{\top}M) \cdot \lambda_{\mathrm{min}}(\bV^\top \bV).
    \end{equation}
    Here, we focus on the non-zero eigenvalues of $\bV^\top M^\top M \bV$ since the zero eigenvalues may not actually mean much. Based on the analyses in the eigenvalues of $M^{\top}M$ in Lemma \ref{app lemma: max_H}, we have $\lambda_{\min}(M^{\top}M)=\frac{1}{m}$. Thus, we obtain
    \begin{equation}
        \lambda_\mathrm{min}(\bar{H}) \geq \frac{\lambda_\mathrm{min}({H})}{m}.
    \end{equation}
\end{proof}

\section{More Discussions on Sequence Modeling and Supervised Learning}\label{sec: discussion on SL}

We demonstrate that traditional pre-training is fundamentally a form of supervised learning.
For a token sequence ${\bx_1, \bx_2, \ldots, \bx_{t-1}, \bx_{t}}$, traditional pre-training, based on the principle of maximum likelihood, optimizes the model $\theta$ as
\begin{align}
\label{eqn: max P}
    \max_{\theta} P_\theta(\bx_1,\bx_2, \ldots, \bx_{t-1}, \bx_{t})=\prod_{k=1}^t p(\bx_k | \bx_1,\bx_2, \ldots, \bx_{k-1}), \bx_k \in \mX.
\end{align}
Given a large collection of sequences, this objective aligns with Eqn. \ref{eqn: pro p}, with the conditions represented as $C(\by)$. In practical applications, this can be reformulated as minimizing the cross-entropy loss:
\begin{equation}
    \min_\theta -\sum_k \log p(\bx_k | \bx_1,\bx_2, \ldots, \bx_{k-1}).
\end{equation}
Thus, the traditional pre-training paradigm for LLMs is essentially equivalent to supervised learning.

\newpage
\section{More Details for Experiment Settings}

\subsection{More Details on Benchmark and Evaluation Metrics}
\label{sec: app More Details on Benchmark}

\textbf{SlimPajama} \citep{cerebras2023slimpajama} dataset is an open-source reproduction of the data mixture used to pretrain the LLaMA models. It includes 82\% web data, 4.5\% code from GitHub, 4.5\% Wikipedia, 4.5\% books, 2.5\% Arxiv, and 2.0\% StackExchange. The dataset is designed to extend the context lengths of large language models (LLMs) to 128K tokens through data engineering techniques such as per-source length upsampling.

\textbf{LongBench}~\citep{bai2023longbench}
is a pioneering benchmark designed to evaluate the long-context understanding capabilities of large language models (LLMs) in a bilingual (Chinese and English) and multitask setting. It includes 21 tasks across 6 major categories, such as single-document QA, multi-document QA, summarization, few-shot learning, synthetic tasks, and code completion, covering key long-text application areas. The benchmark comprises 14 English tasks, 5 Chinese tasks, and 2 code tasks, with most tasks averaging between 5k to 15k in length and totaling 4,750 test data points. 
LongBench contains a test set with more evenly distributed lengths, named LongBench-E, which includes comparable data quantities across 0-4K, 4K-8K, and 8K+ length intervals, enabling thorough analysis of model performance across different input lengths.

\textbf{EM score} \citep{liu2024lost} measures the ability to find the key information within a long multi-document context. It measures whether the model's predicted output exactly matches one or more ground-truth answers, ensuring precision in information retrieval. Therefore, the EM score is calculated by checking if any correct answers appear in the model's predictions, providing a strict assessment of the model's ability to extract precise information from complex contexts. This metric is particularly stringent, as it requires the model to identify the correct answer without partial or approximate matches. 

\textbf{Inter-token latency} (ITL \cite{chitty2024llm}) is a metric for evaluating the average time between generating consecutive tokens in a response. It refers to the time delay between the generation of consecutive tokens (e.g., words or subwords) in a sequence produced by a language model. This metric is influenced by factors like model architecture, computational resources, and optimization techniques. Reducing inter-token latency is essential for enhancing the usability and performance of language models in time-sensitive scenarios.

\textbf{Perplexity (PPL)} serves as a metric to evaluate a model's predictive accuracy within a given context. It is derived by exponentiating the average negative log-likelihood of a sequence, providing a statistical assessment of language modeling efficacy. \textbf{Proof-pile}~\cite{proofpile} is a high-quality dataset encompassing mathematical text and code, totaling 13GB and consisting of 8.3 billion tokens (tokenized using the gpt-neox tokenizer). This dataset aggregates a variety of sources, including both informal and formal mathematical content, with raw data collected from the web. The PPL is reported based on the test dataset. We use the test dataset of Proof-pile to verify the modeling ability of models.

\subsection{Implementation Details on Our Method}
For continuous pretraining, we utilize the SlimPajama dataset, an open-source replication of the LLaMA pretraining data mixture. In our approach, we replace the standard self-attention mechanism in LLaMA-2 with our focal attention. 
Notably, the group size $m=16$ and importance rate $\gamma=0.1$, the number of the focal tokens is $\max \{\gamma L, 1k\}$,
remain consistent across different experiments, ensuring architectural uniformity. 
All models are trained on 8 $\times$ A800 GPUs, employing a micro-batch size of 1 with gradient accumulation over 8 steps, totaling 1000 training steps.
To facilitate effective scaling to longer contexts, we modified the RoPE (Rotary Position Embedding) base frequency from 10,000 to 500,000, following \citet{cerebras2023slimpajama} and \citet{xiong-etal-2024-effective}.

\subsection{Implementation Details on Baselines}
\label{set: Details on Baselines}
We implement StreamingLLM~\citep{xiao2024efficient} following the codebase\footnote{\url{https://github.com/mit-han-lab/streaming-llm}}, LM-Infinite~\citep{han2023lm} following the codebase\footnote{\url{https://github.com/Glaciohound/LM-Infinite}}, MInference~\citep{jiangminference} following the codebase\footnote{\url{https://github.com/microsoft/MInference}}. 
 
For StreamingLLM, we configure the attention sink parameter to 4 and the attention context window size to 2000 tokens. For LM-Infinite, the local branch size is set to 4096 tokens, while the global branch size is configured to 10. For MInference, we utilize their official implementations with default parameter settings.

\subsection{Implementation Details on Sparsity in Figure \ref{fig: sparse_attention}}
In Section \ref{sec: Empirical Studies of sparse}, we conduct experiments on the sparsity of the vanilla attention weights on the samples from the train split of the SlimPajama \citep{cerebras2023slimpajama} dataset, truncating each sample to retain only the first 1,000 tokens. Each sample undergoes a single forward pass through the LLaMA2-7B model \citep{touvron2023llama2}, during which we capture the attention maps from the 9-th attention head of the 24-th and 30-th layers. In Figure~\ref{fig: attention_distribution}, we randomly select a example and display the distribution of attention weights corresponding to the last token in the 30-th layer. Additional results are provided in Figure~\ref{fig: attention_distribution_1wordx4}. In Figure~\ref{fig: sparse_distribution}, we randomly select 100 examples and present the statistical analysis of attention distributions across rows $L \in [100, 1000]$ in different attention maps, quantified using the $\rho$-sparse metric $P_{\text{sparse}}(L, \rho)$ with different $\rho \in \{0.05, 0.02, 0.01\}$, as defined in Theorem \ref{thm: Sparsity of weight}. We estimate $P_{\text{sparse}}(L, \rho)$ using the observed sparsity frequency in the 100 examples. Additional results are provided in Figure~\ref{fig: attention_sparse4x4}.

\subsection{Implementation Details on Optimization Efficiency in Figure \ref{fig: Optimization Efficiency}}

\begin{wrapfigure}[11]{r}{0.43\textwidth}
    \vspace{-0.25in}
    \centering
    \includegraphics[width=0.6\linewidth]{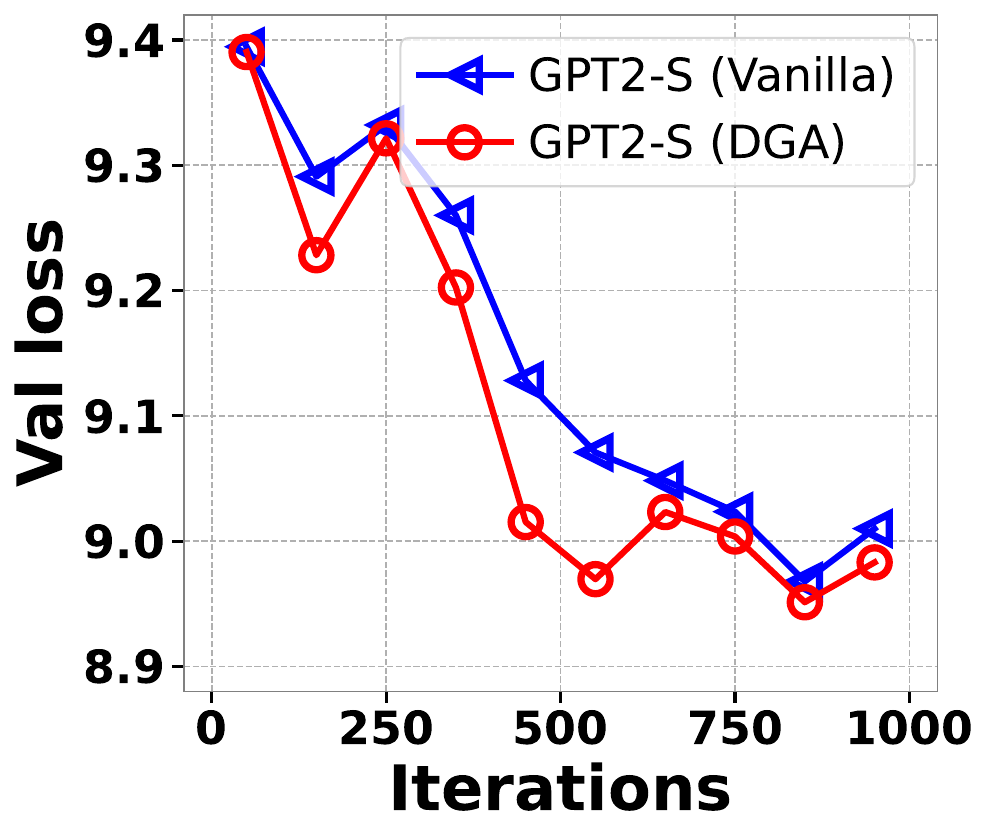}
    \vspace{-0.2in}
    \caption{Validation losses of our \myllm and vanilla ones on GPT2-S, where the models are trained with a 1K context length on SlimPajama.}
    \label{fig:eval_loss_only_gpt}
\end{wrapfigure}

We train the GPT2-S and OPT-125M models on the SlimPajama dataset using a distributed setup consisting of 8 A800 GPUs. The training configuration includes a micro-batch size of 1, gradient accumulation over 8 steps, and a total of 1000 training steps. Integrating our proposed \myattention into the model, we set the group size \( m \) to 16 and the importance rate to 0.1. For GPT2-S, the learning rate is set to \( 2 \times 10^{-4} \), with a warmup period of 20 steps. For OPT-125M, the learning rate is set to \( 8 \times 10^{-5} \), with a warmup period of 50 steps and a weight decay coefficient of 0.1. Additionally, the AdamW optimizer is employed with the hyperparameters \( \beta_1 = 0.9 \) and \( \beta_2 = 0.95 \).

\section{Details on Causing Mask in Group-Oriented Attention}\label{sec: Causing Mask}

\begin{algorithm}[h]
\caption{ Constructing causing mask $\bM$ for attention.}
\label{alg: causing mask}
\begin{algorithmic}[1]
    \INPUT  Matrices $\bQ$, $\bK$, $\bV \in \mathbb{R}^{L \times d}$, group size $m$, \textit{focal tokens} $\mT ^\mathrm{foc}$ and \textit{non-focal tokens} $\mT^\mathrm{non}$.
    \OUTPUT Causing mask $\bM \in \mathbb{R}^{L \times (r+k+m)}$.

    \STATE Initialize a lower triangular matrix $\bM^0 \in \mathbb{R} ^ {L \times L}$ with ones.
    \STATE Partition $\mT ^\mathrm{non}$ into $k$ groups $\{G_i\}_{i=1}^k$, each of size $m$. // Note that non-divisible tokens are classified as focal tokens.
    \STATE Construct 0-1 mask $\bM^n {\in} \mathbb{R} ^{L \times k}$ for aggregated tokens by setting $\bM^n_{i,j}{=}1 $ if $\|\bM_{i,G_j}\|_0{>}0$. 
    \STATE Construct 0-1 mask $\bM^c {\in} \mathbb{R} ^{L \times m}$ for complemented tokens:\\
    ~~~~~~~~~~~~~~~~~~~~~~~~~~~~~~~~~~~~~~~$\bM^c_i=\Phi_{i,[m(z-1)+1:mz]} \in \mathbb{R}^{m}$ if $i {\in} [\mathrm{min}~{G_z}, \mathrm{max}~{G_z}]$,\\
    ~where $\Phi = \bM^0_{\mT ^\mathrm{non}}-\phi^m (\bM^n)$,  $\phi^m(\bM^n)$ repeats each column of the matrix $\bM^n$ $m$ times.
    \STATE Final mask is $\bM=[\bM^0_{\mT ^\mathrm{foc}},\bM^n, \bM^c] \in \mathbb{R} ^{L\times(r+k+m)}$.
\end{algorithmic}
\end{algorithm}

\newpage

\section{Analysis on Reduced Complexity and Key-value Cache}
\label{sec: Analysis on Reduced Complexity}

\begin{figure}[ht]
    \centering
    \includegraphics[width=0.6\linewidth]{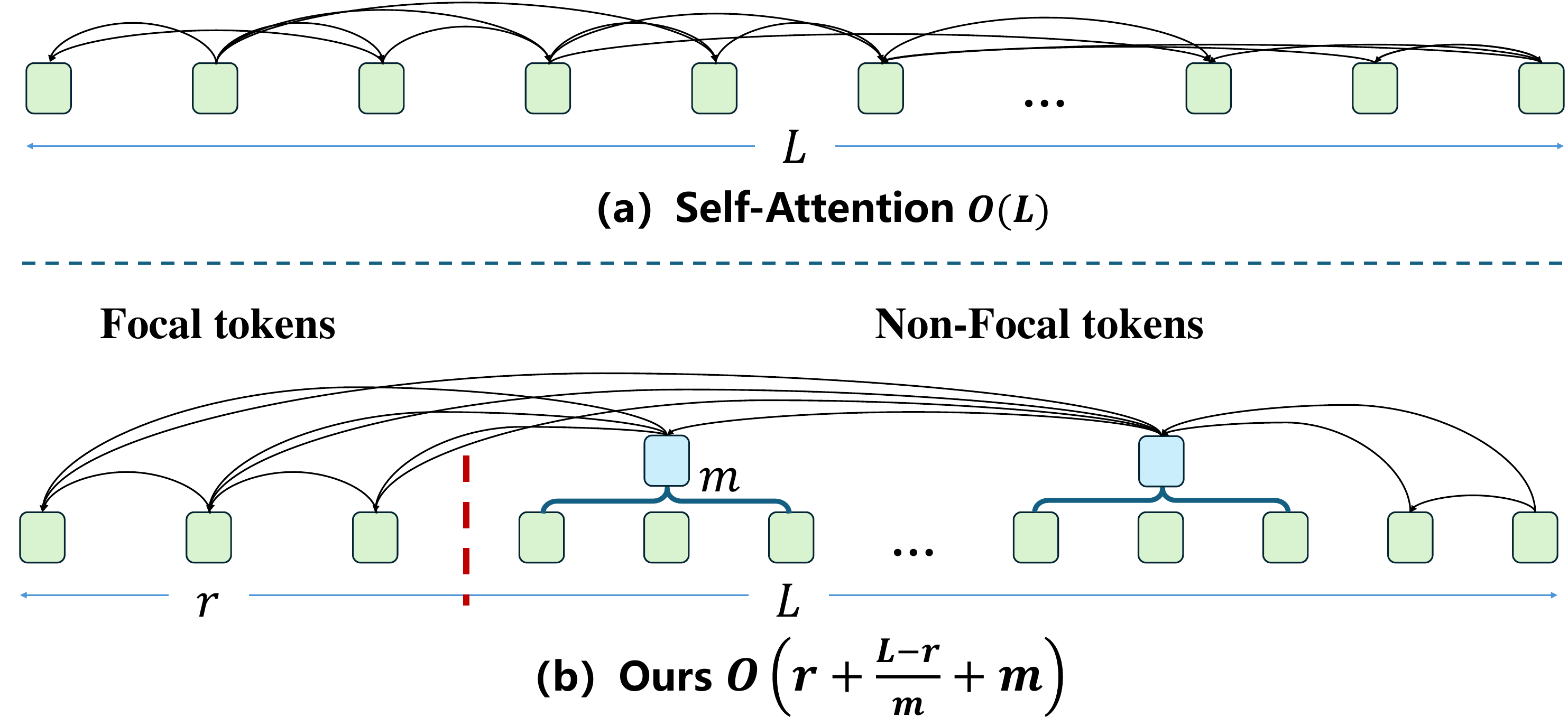}
    \vspace{-0.1in}
    \caption{Comparison of the computational complexity during inference between our \myllm and vanilla self-attention.}
    \label{fig:inference}
\end{figure}

Our \myllm enhances efficiency by adaptively grouping and aggregating redundant non-focal tokens, thereby reducing computational complexity and key-value cache memory storage compared to vanilla self-attention. Based on these benefits, our \myllm achieves substantial improvements in terms of running speed and memory usage efficiency.

\textbf{Computational Complexity Optimization.} 
Our \myllm exhibits varying computational complexities across training and inference phases. For the training phases, \myllm achieves a computational complexity of $O(Lr+L\frac{L-r}{m}+Lm)$, representing a substantial improvement over the vanilla full self-attention mechanism, which has a complexity of $O(L^2)$. Specifically, the computation primarily involves three types of tokens: (1) focal tokens, contributing a complexity of ($Lr$); (2) aggregated non-focal tokens, contributing ($L\frac{L-r}{m}$); (3) complement tokens, contributing ($Lm$). Considering the group size $m$ and the number of focal tokens $r$ as constant, the final computational complexity can degenerate as $O({L^2}/{m})$.

As shown in Figure~\ref{fig:inference}, during the inference phase, where the generation process involves next-token prediction, the computational complexity of generating a token is reduced to $O(r + \frac{L-r}{m} + m)$. This is significantly lower with large $m$ compared to the full self-attention mechanism, which has a complexity of $O(L)$.

\textbf{Key-Value (KV) Cache Reduction.} The KV cache is a widely used mechanism in self-attention-based Transformers to enhance the efficiency of attention computations by leveraging the model’s auto-regressive nature to store and reuse key-value pairs (\ie, $\mathbf{K}$ and $\mathbf{V}$). However, its memory consumption scales linearly with sequence length, making it a primary contributor to the overall memory footprint during inference. The increasing memory demand not only imposes substantial resource requirements but also potentially degrades inference speed due to the heightened frequency of I/O operations.

The full self-attention mechanism has a storage complexity of $O(L)$, whereas our proposed \myllm significantly reduces it to $O(r + \frac{L-r}{m} + m)$. Specifically, instead of storing all original tokens at $O(L)$ complexity, we retain key and value matrices for three distinct token types: focal tokens with a complexity of $O(r)$, aggregated non-focal tokens at $O(\frac{L-r}{m})$, and complement tokens at $O(m)$.

\newpage
\section{Ablation Studies}
\textbf{Effect of group Size $m$.}
In this experiment, we investigate the impact of group size $m$ on model performance and latency. As shown in Table \ref{tab:ablation-groupsize}, when varying $m$ from 4 to 64, we observe only a slight degradation in model performance, with perplexity (PPL) increasing from $3.20$ to $3.87$ and accuracy decreasing from $70.17$\% to $65.00$\%. However, computational efficiency shows a notable improvement, with latency significantly decreasing from $189.80$ ms to $162.02$ ms. Thus, we set $m=16$ to strike a balance between computational efficiency and model performance as the default training setting.

\begin{table*}[h]
\centering
\vspace{-15pt}
\caption{Effect of group size $m$.}
\label{tab:ablation-groupsize}
\resizebox{0.55\textwidth}{!}{%
\begin{tabular}{l|ccccc}
\hline
$m$  & 4 & 8 & 16 & 32 & 64 \\ 
\hline
PPL $\downarrow$  & 3.20 & 3.42 & 3.59 & 3.77 & 3.87 \\
Accuracy $\uparrow$ (\%) & 70.17 & 68.33 & 67.00 & 65.61 & 65.00 \\
Latency $\downarrow$ (ms)  & 189.80 & 189.10 & 172.12 & 169.40 & 162.02 \\ 
\hline
\end{tabular}%
}
\end{table*}

\textbf{Effect of focal token identification.}
In this experiment, we investigate the effect of different focal token identification strategies on model performance. From Table \ref{tab:ablation-focal token}, compared to the randomly selected strategy, our Top-K approach achieves an improvement in accuracy from $66.91$\% to $67.00$\%. At the same time, perplexity (PPL) shows a decrease from $3.60$ to $3.59$. These results suggest the effectiveness of focal token identification in \myattention over the randomly selected strategy.

\begin{table*}[h]
\centering
\vspace{-10pt}
\caption{Effect of focal token identification strategy.}
\label{tab:ablation-focal token}
\resizebox{0.5\textwidth}{!}{%
\begin{tabular}{l|ccc}
\hline
Strategy & Randomly Selected & Top-K (Ours) \\ 
\hline
PPL $\downarrow$ & 3.60 & 3.59 \\
Accuracy $\uparrow$ (\%) & 66.91 & 67.00  \\
\hline
\end{tabular}%
}
\end{table*}

\textbf{Effect of importance rate $\gamma$.}
In this experiment, we investigate the effect of the importance rate $\gamma$ in our \myattention. Table \ref{tab:ablation-importance rate} demonstrates a significant positive correlation between $\gamma$ values and model performance. Specifically, as $\gamma$ increases from 0.1 to 0.9, the model accuracy improves substantially from $67.00$\% to $70.58$\%, while perplexity (PPL) consistently decreases from $3.59$ to $3.20$. However, the latency shows an increase as $\gamma$ increases, growing from $172.12$ ms to $555.25$ ms. A larger $\gamma$ indicates that more tokens are identified as focal tokens, thereby preserving a greater portion of the original input information in the self-attention computations. This preservation of information, while beneficial for accuracy, inevitably increases the computational overhead. Conversely, a smaller $\gamma$ leads to more tokens being grouped and aggregated, which enhances computational efficiency but at the expense of model performance. Striking a balance between computational efficiency and model effectiveness, we select $\gamma=0.1$ as the default training setting.

\begin{table*}[h]
\centering
\vspace{-10pt}
\caption{Effect of importance rate $\gamma$.}
\label{tab:ablation-importance rate}
\resizebox{0.55\textwidth}{!}{%
\begin{tabular}{l|ccccc}
\hline
$\gamma$ & ~~0.1~~ & ~~0.3~~ & ~~0.5~~ & ~~0.7~~ & ~~0.9~~  \\ 
\hline
PPL $\downarrow$ & 3.59 & 3.43 & 3.27 & 3.22 & 3.20 \\
Accuracy $\uparrow$ (\%) & 67.00 & 68.25 & 69.67 & 70.29 & 70.58 \\
Latency $\downarrow$ (ms) & 172.12 & 224.85 & 386.36 & 465.35  & 555.25 \\ 
\hline
\end{tabular}%
}
\end{table*}

\textbf{Effect of complementary tokens.} Table \ref{tab:complementary_abl} shows that removing complementary tokens significantly degrades performance on tasks like Multi-Doc QA from 3.58 to 2.37 and Code from $53.45$ to $48.00$, indicating complementary tokens are critical for context modeling. They slightly increase latency from $24.9$ ms to $28.8$ ms, but still $2.4 \sim 3.5 \times$ faster than vanilla self-attention (Table \ref{tab:ruler}: $69.70 \sim102.22$ ms).
\begin{table*}[h]
\centering
\caption{Effect of complementary tokens on LongBench-E~\cite{bai2023longbench}, where the 95\% text length quantile is 31K. ``ITL'' denotes inter-token latency \cite{chitty2024llm}, which measures the time delay between generating consecutive tokens.}
\resizebox{\textwidth}{!}{%
\begin{tabular}{l|cccccc|>{\columncolor{LightBlue!80}}c|>{\columncolor{LightBlue!80}}c}
\hline
Methods & Single Doc. QA & Multi Doc. QA & Summar. & FS Learning & Synthetic & Code & Avg. $\uparrow$ & ITL (ms) $\downarrow$ \\ \hline
w/o comple. tokens & 6.43 & 2.37 & 8.47 & 53.69 & 3.04 & 48.00 & 20.33 & 24.9 \\
DGA-LLM (Ours) & 3.61 & 3.58 & 6.81 & 57.90 & 1.47 & 53.45 & 21.14 & 28.8 \\ \hline
\end{tabular}%
}
\label{tab:complementary_abl}
\end{table*}

\newpage

\section{Discussions on DGA-LLM}
\textbf{Hyperparameters selection}. Experimental analysis in Tables \ref{tab:ablation-groupsize} and \ref{tab:complementary_abl} reveals that smaller group sizes ($m$) and higher importance rates ($\gamma$) improve performance at the cost of increased latency. Based on these, we recommend the following practical guidelines for parameter selection: For resource-constrained scenarios, larger $m$ and lower $\gamma$ balance acceptable performance with reduced complexity. Performance-critical applications (\eg, medical diagnosis) benefit from minimal $m$ and maximal $\gamma$ to preserve accuracy, whereas latency-sensitive tasks (\eg, real-time systems) require moderate $m$ and lower $\gamma$ for responsiveness. An adaptive framework to automatically optimize ($m$, $\gamma$) based on application-specific accuracy-latency trade-offs remains a promising direction for future work.

\textbf{Potential application.} While our current work focuses on long-text modeling, the proposed DGA mechanism is inherently task-agnostic and could generalize to other long-sequence domains (\eg, video/audio) where redundancy exists in sequential tokens and adaptive token importance assessment is critical. For instance, 1) Video processing: Temporal sequences in videos often exhibit localized redundancy (\eg, static backgrounds or repetitive motions). DGA could dynamically group less informative frames while preserving critical temporal segments. 2) Audio processing: Long audio signals contain silent or redundant segments. DGA’s importance scoring could prioritize phonetically rich regions, enabling efficient compression.

\section{More Experimental Results}

\subsection{More Results on Redundancy in Self-attention}
\label{sec: app Results on Redundancy}

In this section, we present additional visualizations of the sparsity in vanilla attention weights. Figure \ref{fig: attention_distribution_1wordx4} illustrates the distributions of attention weights across four randomly sampled long contexts from SlimPajama.
Figure \ref{fig: attention_sparse4x4} depicts the distributions of $P_{\text{sparse}}(L, \rho)$  for $\rho \in \{0.05, 0.02, 0.01\}$. The results reveal that attention weights grow increasingly sparse with longer contexts, with only a small subset of tokens playing a significant role in predictions, aligning with the findings in Theorem \ref{thm: Sparsity of weight} and the results in Section \ref{sec: Empirical Studies of sparse}.

\begin{figure}[t]
    \centering
    \vspace{4.5pt}
    \vspace{-13pt}
    \subfigbottomskip=1.5pt
    \subfigcapskip=-5pt
    \subfigure[]{
		\label{fig: attention_distribution_2x2_600}
		\includegraphics[width=0.99\linewidth]{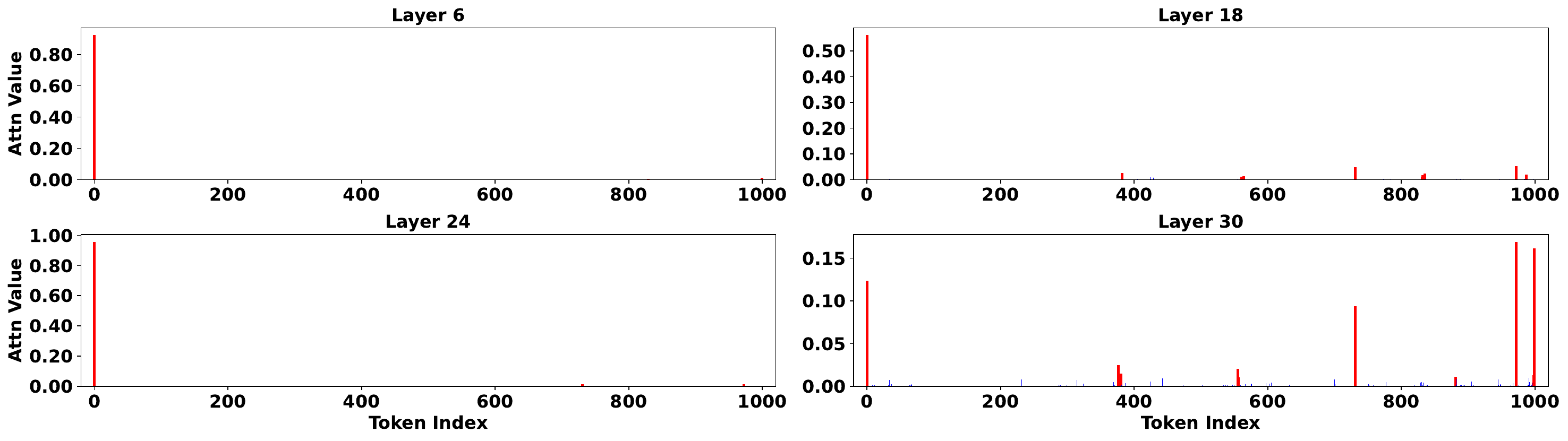}
    }
    \subfigure[]{
		\label{fig: attention_distribution_2x2_1005}
		\includegraphics[width=0.99\linewidth]{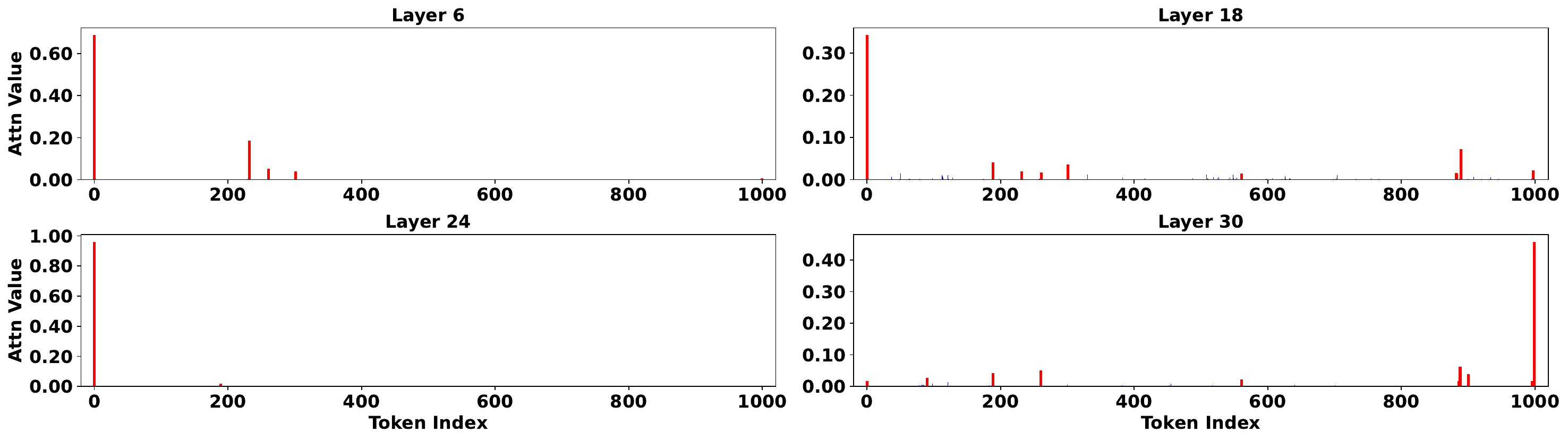}
    }
    \subfigure[]{
		\label{fig: attention_distribution_2x2_2010}
		\includegraphics[width=0.99\linewidth]{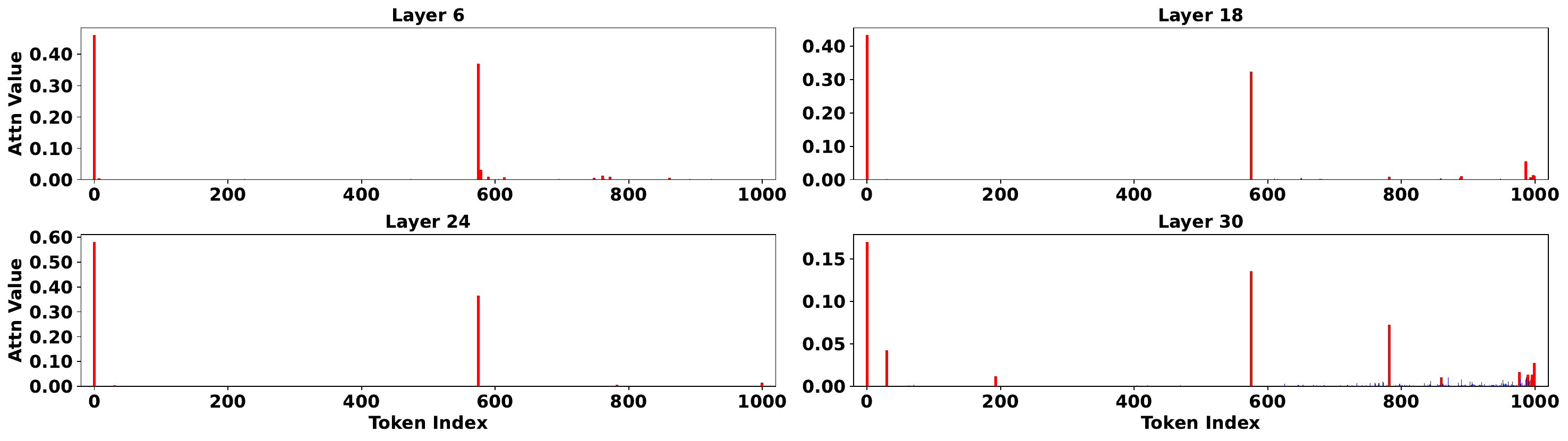}
    }
    \subfigure[]{
		\label{fig: attention_distribution_2x2_3015}
		\includegraphics[width=0.99\linewidth]{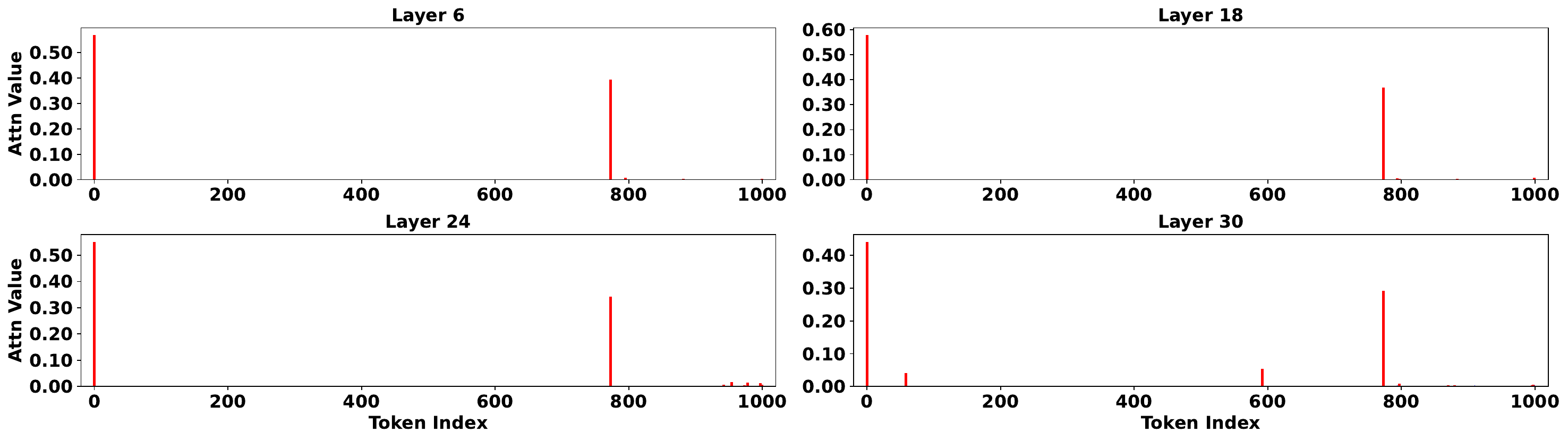}
    }
    \vspace{-8pt}
    \caption{More visualizations on distribution of attention weights over Llama2-7B on 4 random examples from SlimPajama.}
    \label{fig: attention_distribution_1wordx4}
    \vspace{-15pt}
\end{figure}

\begin{figure*}[ht]
        \centering
        \vspace{4.5pt}
	\includegraphics[width=0.99\linewidth]{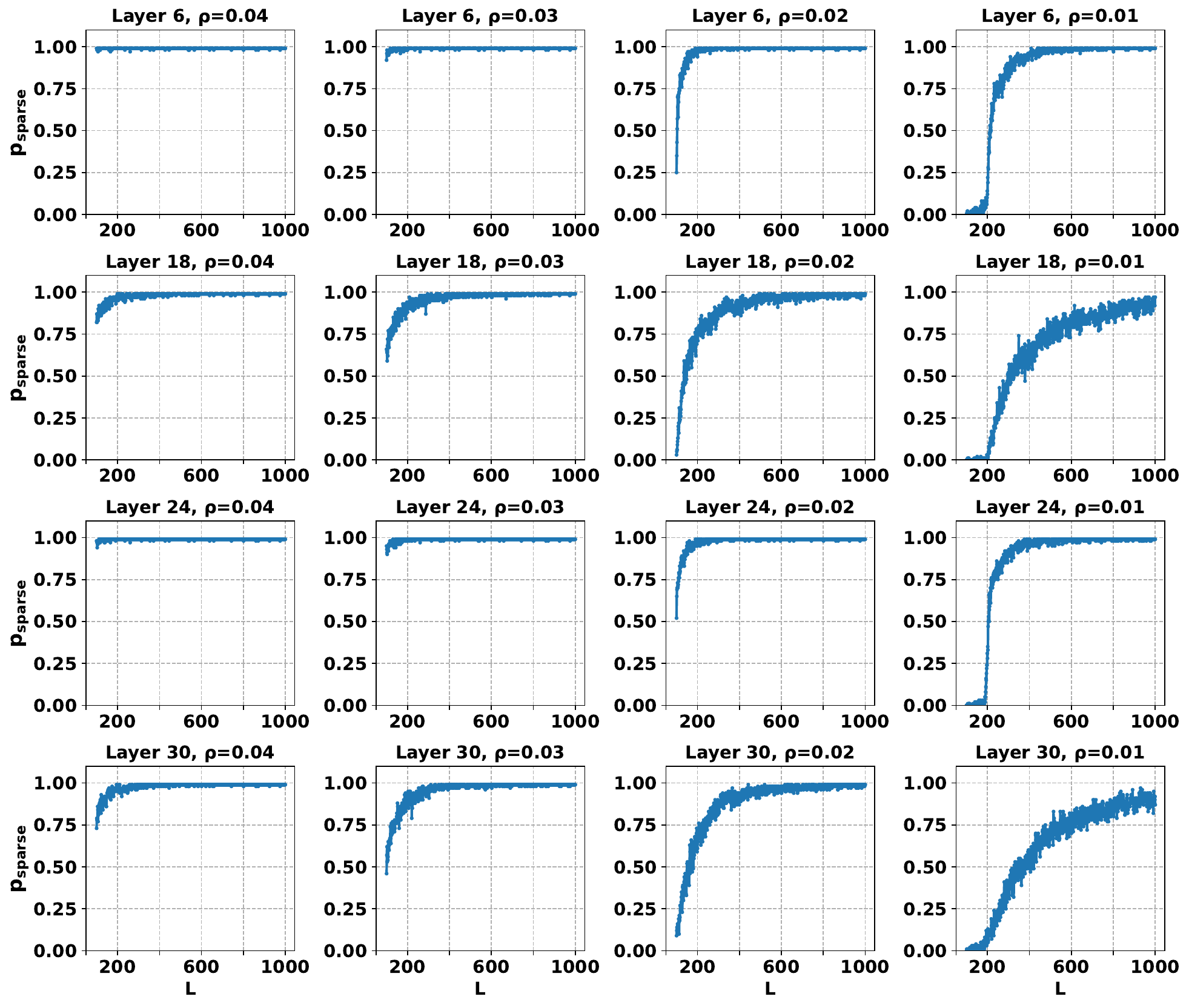}
	\vspace{-5pt}
	\caption{More visualizations on the sparsity of the attention weights.}
	\label{fig: attention_sparse4x4}
	\vspace{-5pt}
\end{figure*}

\section{Future Directions}

While our proposed Dynamic Group Attention (\myattention) demonstrates significant improvements in computational efficiency and robustness for long-context modeling, several avenues remain for further exploration. First, extending \myattention to multi-modal settings, such as vision-language models or audio-text models, could unlock new possibilities for efficient cross-modal reasoning. For instance, integrating \myattention with architectures like CLIP \cite{radford2021learning} or Flamingo \cite{alayrac2022flamingo} could enable more efficient processing of long video or audio sequences while maintaining high performance. Additionally, exploring the application of \myattention in low-resource environments, such as edge devices or mobile platforms, could further enhance its practicality. Techniques like quantization \cite{frantar2022gptq}, distillation \cite{gu2024minillm}, or hardware-aware optimization \cite{artetxe2021efficient} could be combined with \myattention to reduce memory and computational requirements, making it suitable for real-time applications in resource-constrained settings.

Second, the theoretical foundations of \myattention could be further refined to better understand its limitations and potential improvements. For example, a deeper analysis of the trade-offs between group size $m$ and model performance could provide insights into optimal hyperparameter settings for different tasks and datasets. Moreover, investigating the impact of \myattention on downstream tasks, such as few-shot learning \cite{brown2020language}, transfer learning \cite{kenton2019bert}, or continual learning \cite{Gu2023kecontinual}, could reveal its broader applicability. Finally, exploring adaptive mechanisms for dynamically adjusting the group size $m$ or the importance threshold $\gamma$ during training and inference could further enhance the flexibility and efficiency of \myattention. These directions not only address the limitations of the current approach but also open up new opportunities for advancing long-context modeling in both research and practical applications.




\end{document}